\newtheorem{lemma}{Lemma}
\newtheorem{theorem}{Theorem}
\DeclareMathOperator{\avg}{avg} % no space, limits on side in displays
\DeclareMathOperator{\argmax}{argmax} % no space, limits on side in displays
\title{\LARGE \bf
%Preparation of Papers for IEEE Sponsored Conferences \& Symposia*
Learning to Optimally Segment Point Clouds
}
\author{Peiyun Hu$^{1}$, David Held$^{1}$\textsuperscript{*}, Deva Ramanan$^{1,2}$\textsuperscript{*}\\
$^{1}$Robotics Institute, Carnegie Mellon University\\
$^{2}$Argo AI\\
{\tt \small \{peiyunh@cs, dheld@andrew, deva@cs\}.cmu.edu}
\thanks{\textsuperscript{*} indicates two authors have equal contribution.}% <-this % stops a space
}
\begin{document}

\maketitle
\thispagestyle{plain}
\pagestyle{plain}

%%%%%%%%%%%%%%%%%%%%%%%%%%%%%%%%%%%%%%%%%%%%%%%%%%%%%%%%%%%%%%%%%%%%%%%%%%%%%%%%
\begin{abstract}
%
%This electronic document is a live template. The various components of your paper [title, text, heads, etc.] are already defined on the style sheet, as illustrated by the portions given in this document.
%
%Perception for autonomous vehicles presents a collection of unique challenges: (1) finding the right presentation for 3D signals; (2) adapting to an open-world setting; (3) exploiting geometric perceptual priors. In this paper,
We focus on the problem of class-agnostic instance segmentation of LiDAR point clouds. We propose an approach that combines graph-theoretic search with data-driven learning: it searches over a set of candidate segmentations and returns one where individual segments score well according to a data-driven point-based model of ``objectness''. We prove that if we score a segmentation by the worst objectness among its individual segments, there is an efficient algorithm that finds the optimal worst-case segmentation among an exponentially large number of candidate segmentations. We also present an efficient algorithm for the average-case. For evaluation, we repurpose KITTI 3D detection as a segmentation benchmark and empirically demonstrate that our algorithms significantly outperform past bottom-up segmentation approaches and top-down object-based algorithms on segmenting point clouds.
\end{abstract}

\section{Introduction}
% vision for robotics
%% what is the right representation for sparse points?
%% Open-world vs closed-world understanding
%% Map-based priors
Perception for autonomous robots presents a collection of compelling challenges for computer vision. We focus on the application of autonomous vehicles. This domain has three notable properties that tend not to surface in traditional vision applications: (1) 3D sensing in the form of LiDAR technology, which exhibits different properties than traditional 3D vision captured through stereo or structured light. Despite significant work in this area, the right representation for such sparse 3D signals still remains an open question. (2) Contemporary approaches to object detection and scene understanding tend to be closed-world, where the task is predicting 1-of-N possible labels. But autonomous systems require the ability to recognize all possible obstacles and movers - e.g., a piece of road debris must be avoided regardless of what \emph{name} it has. Such understanding is crucial from a safety perspective. Historically, this has been formulated as a perceptual grouping or bottom-up segmentation task, which is typically addressed with different approaches. (3) Finally, practical autonomous robotics makes heavy use of perceptual priors in the forms of geometric maps and assumptions on LiDAR geometry.  Indeed, prior map was a crucial component among finishing entries in the DARPA Urban Grand Challenge~\cite{urmson2008autonomous,montemerlo2008junior}.

\begin{figure}[t]
  \centering
  \includegraphics[clip=true,trim={0 10cm 0 5cm},width=\linewidth]{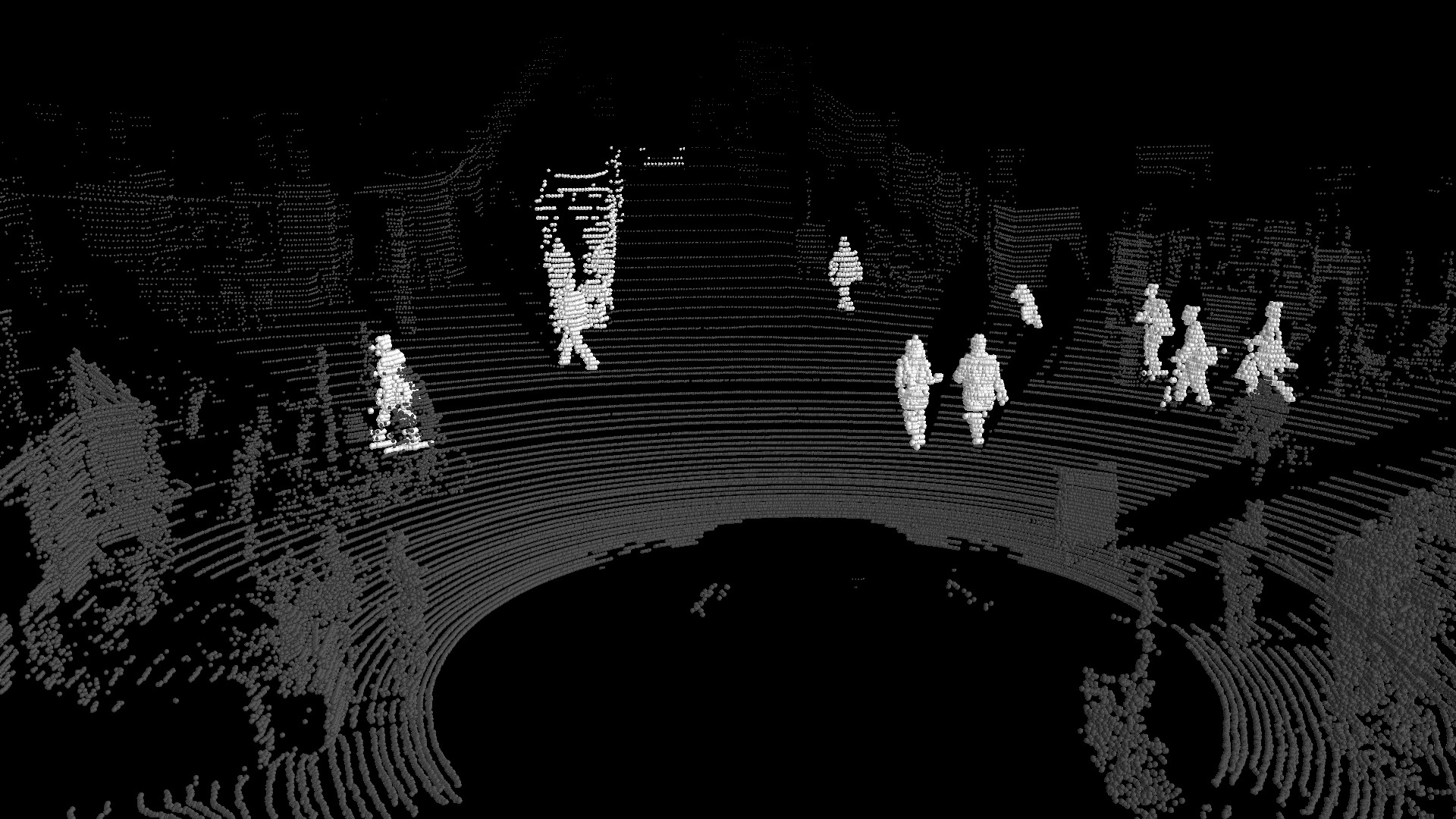}\\[.1em]
  \includegraphics[clip=true,trim={0 10cm 0 5cm},width=\linewidth]{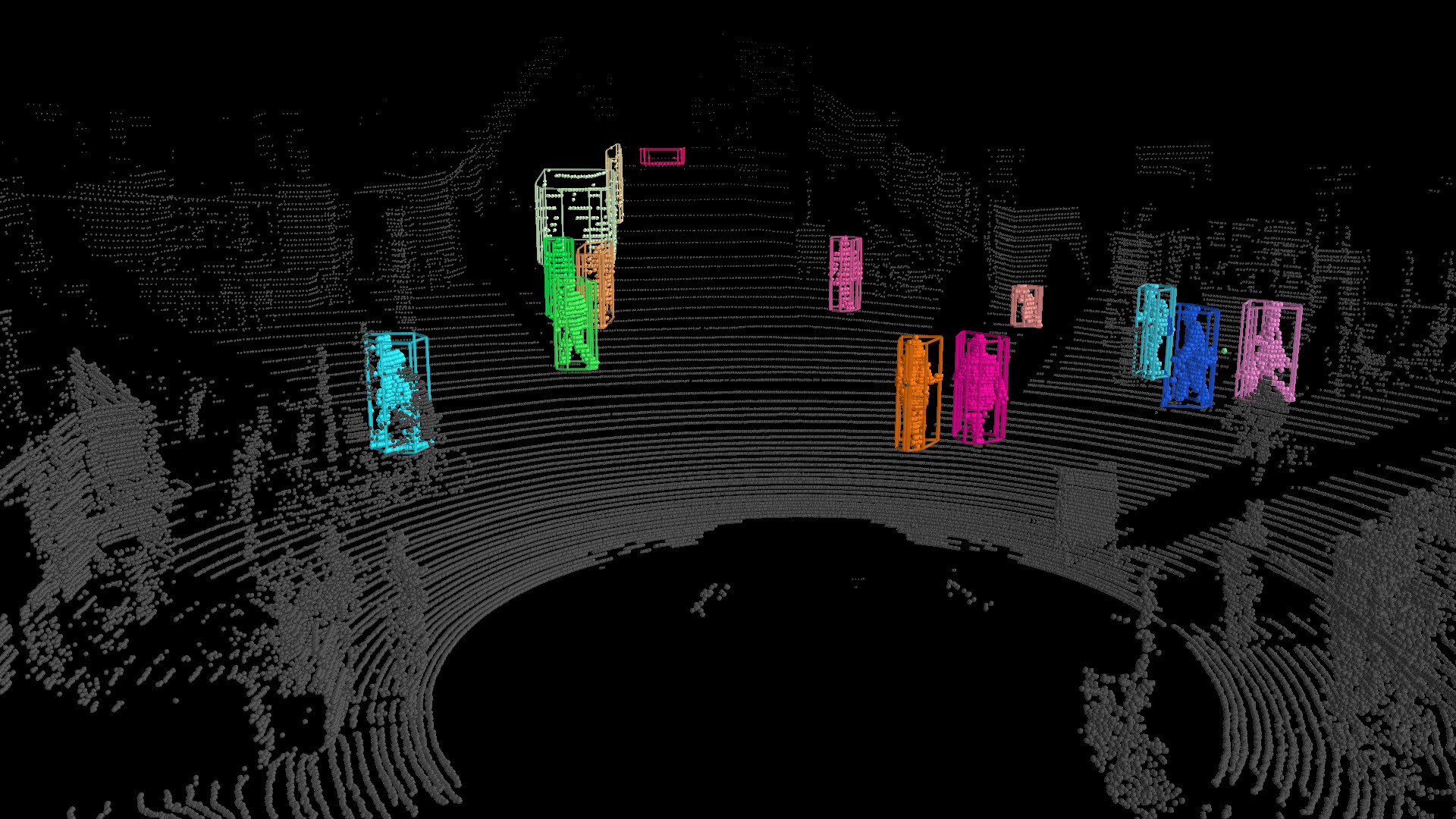}
  \caption{Our proposed algorithm takes a pre-processed LiDAR point cloud with background removed (top) and produces a class-agnostic instance-level segmentation over all foreground points (bottom). For visualization, we use a different color for each segment and plot an extruded polygon to show the spatial extent.}
  \label{fig:splash}
\end{figure}

{\bf Motivation:} In this work, we focus on the problem of class-agnostic instance segmentation of LiDAR point clouds (Figure~\ref{fig:splash}) in an open-world setting. We carefully mix graph-theoretic algorithms with data-driven learning. Data-driven learning has made an undeniable impact on computer vision, but is difficult to make guarantees about performance when processing out-of-sample data from an open world. Geometric graph-based approaches for segmentation tend not to require training and so are less-like to overfit, but also tend to be brittle.

{\bf Approach:} Our approach searches over an exponentially-large space of candidate segmentations and returns one where individual segments score well according to a data-driven point-based model of ``objectness"~\cite{alexe2010object}. We demonstrate that one can repurpose existing closed-world point networks~\cite{qi2017pointnet++} for bottom-up perceptual grouping tasks that generalize to objects rarely seen during training.

{\bf Optimality:} We prove that our approach produces \emph{optimal} segmentations according to a specific definition. First, we restrict the search into a subset of segmentations that are consistent with a hierarchical grouping of a point cloud sweep. Such hierarchical groups can be readily produced with agglomerative clustering~\cite{zhou2009streaming}, HDBSCAN~\cite{mcinnes2017hdbscan}, or hierarchical graph-based algorithms~\cite{strom2010graph}.

Naive methods for producing a segmentation might apply a global threshold over the whole hierarchy. It turns out that one can produce an exponentially-large set of segmentations by applying different thresholds at different branches. We introduce efficient algorithms that search over this space of tree-consistent segmentations (Figure~\ref{fig:tree}) and return the one that maximizes a global segmentation score that is computed by aggregating local objectness scores of individual segments.

{\bf Evaluation:} We demonstrate empirical results on KITTI, a benchmark originally designed for closed-world object detection. Following past work, we repurpose it for open-world 3D segmentation~\cite{held2016probabilistic}. We compare to existing bottom-up approaches~\cite{rusu2010semantic} and state-of-the-art LiDAR-based object detectors after converting their output 3D bounding boxes to a point cloud segmentation. We demonstrate that our approaches outperform both baselines on less common classes.

\section{Related work}

Robust 3D object detection is crucial for downstream applications such as semantic understanding~\cite{hackel2016fast} and tracking~\cite{DBLP:journals/corr/abs-1907-03961}. Comparing to monocular 3D detection~\cite{ku2019monocular}, we focus on LiDAR-based solutions in this paper.

{\bf LiDAR segmentation:} Classic LiDAR segmentation algorithms use bottom-up grouping such as flood-filling~\cite{douillard2011segmentation}, connected components~\cite{klasing2008clustering}, or density-based clustering~\cite{mcinnes2017hdbscan}. Bottom-up strategies can also be applied on LiDAR sequences, allowing for motion as an additional cue~\cite{held2014combining,teichman2011towards,azim2012detection}. Oftentimes such approaches are tuned for particular object categories such as cars. Our work differs in its use of static, single-frame cues that are not object-specific.

{\bf LiDAR object detection:} There is an ever-increasing literature on data-driven object detection with LiDAR point clouds. Early approaches include fusion-based models that combine LiDAR and imagery~\cite{ku2018joint}, tracking-based detectors~\cite{luo2018fast} and voxel-based classifiers~\cite{zhou2018voxelnet,lang2018pointpillars,yan2018second}. We have seen approaches built upon raw point clouds such as PointRCNN~\cite{shi2000normalized}. Our approach is most related to Frustum PointNet~\cite{qi2018frustum} in the way we use pooled point cloud representation~\cite{qi2017pointnet++}. Our work differs in that we do not make use of camera input, and most notably, focus on \emph{all} possible objects in an open world. Specifically, we compare to \cite{ku2018joint,lang2018pointpillars,yan2018second,shi2019pointrcnn} as a representative sample of the literature.

{\bf Perceptual grouping:} Our graph-based approach is inspired by a long line of classic work on graph-theoretic perceptual grouping, dating back to normalized cuts~\cite{shi2000normalized}, graph cuts~\cite{boykov1999fast}, and spanning-tree approaches~\cite{felzenszwalb2004efficient}. Such methods are typically used with hand-designed features, while we make use of data-driven techniques for learning a shape-based segment classifier.

{\bf Image segmentation:} The idea of searching for an optimal image segmentation given a hierarchical image segmentation tree has been explored. \cite{uzunbas2016efficient} formulates neuron segmentation on electron microscopy images as a \emph{maximum a posteriori} (MAP) labeling task on a tree-structured graph. It can be made equivalent to our search under certain conditions. \cite{ovsep2018track} tackles the problem of class-agnostic instance segmentation in image space by exploiting visual appearance and motion. We discuss more in Section~\ref{sec:approach} and~\ref{sec:results}.

\begin{figure}
  \centering
    \includegraphics[height=3cm]{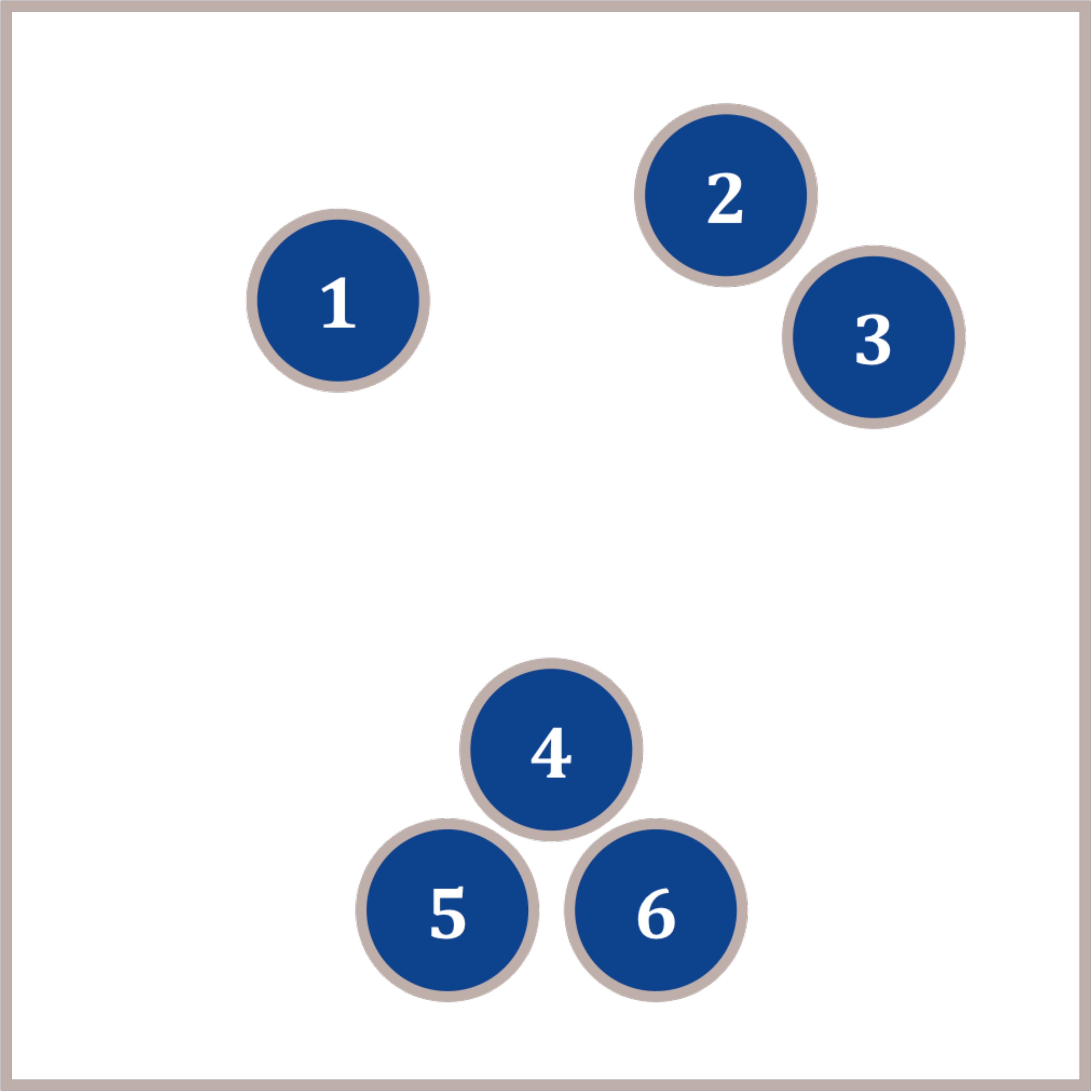}
    \includegraphics[height=3cm]{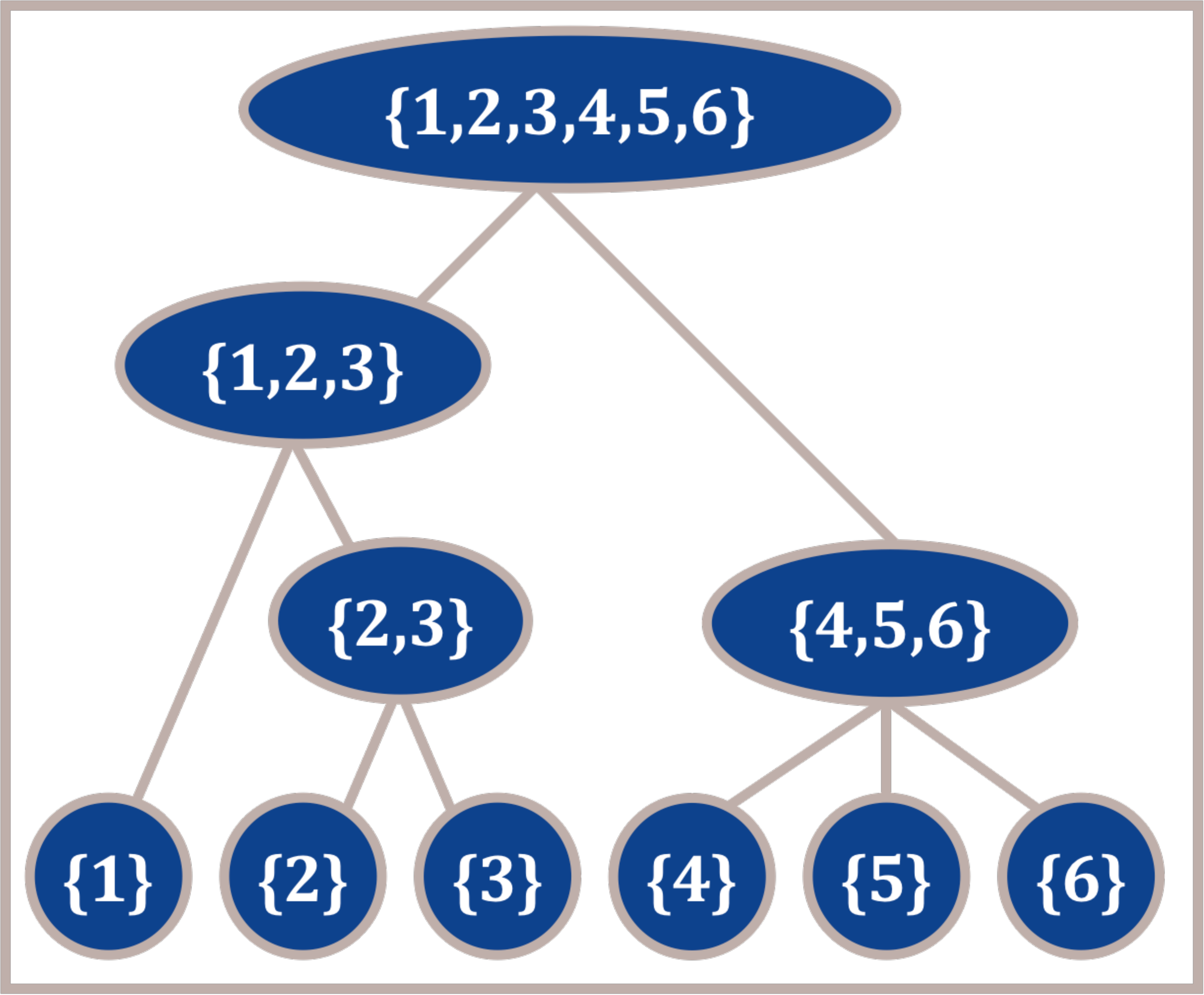}
\caption{On the left, we visualize a set with 6 points. According to Bell number, one will find 203 unique segmentations (partitions). Most of these are arbitrary and do not respect local geometry, e.g. $\{\{1,2,5\},\{3,4,6\}\}$. On the right, we implement geometric constraints with a tree formed by hierarchical grouping. Every vertex cut of this tree is automatically a segmentation that respects local geometry encoded by the tree, e.g. $\{\{1\}, \{2,3\}, \{4,5,6\}\}$.}
  \label{fig:tree}
\end{figure}

\section{Approach}
\label{sec:approach}

For 3D object point segmentation, the input is a 3D point cloud, which contains an \textit{unknown} number of objects. The goal is to produce a point segmentation, in which every segment contains points from one and only one object.

{\bf Segmentation:} A global \emph{segmentation} $P_X$ is a partition of a set of points $X=\{x_i\}_{i=1}^N$ into subsets of points, i.e. $P_X=\{C_i\}_{i=1}^M$, where $M$ denotes the number of segments and $C_i \subset X$. We refer to each $C_i$ as a local \emph{segment}. Importantly, every point exists in one and only one segment, meaning $\cup_{i=1}^M C_i=X$ and $\forall i \neq j, C_i \cap C_j =\emptyset$.

{\bf Tree-consistent segmentations:} Let us use $S_X$ to denote the set of all possible global segmentations on $X$, i.e.\ all possible $P_X$. Without constraints, the size of $S_X$ is exponential in $N$ (i.e.\ the Bell number). In practice, we can reduce the number of candidates by enforcing geometric constraints. In this work, we implement the constraints by grouping all points hierarchically into a tree structure $T_X$. We will discuss how to build such a tree structure based on local geometric cues in Section~\ref{sec:tree}. For now let us assume the tree is given.

Once we specify the tree, we can focus on a strictly smaller set of segmentations that respect local geometry. We denote such set as $S_{X,T}$ and call them \emph{tree-consistent} segmentations. As a reference, the size of $S_{X,T}$ is still exponential in $N$, when $T_X$ is a balanced binary tree~\footnote{One can derive recurrence on the number of segmentations between depth d+1 and d as $K_{d+1}=K_d^2+1$ with $K_1=2$. Since $K_{d}>2^{2(d-1)}$, $K_d/N_d > 2^{d-2}$, where $N_d=2^d$ represents the number of leaves, it suggests the number of segmentations \textit{at least} outgrow the number of leaves exponentially.}. We further illustrate the relationship between $S_X$ and $S_{X,T}$ with an example in Figure~\ref{fig:tree}. Any tree-consistent segmentation from $S_{X,T}$ corresponds to a vertex cut set of the tree $T$, i.e.\ a set of tree nodes, which satisfy the following constraints: (1) for each node in the vertex cut, its ancestor and itself cannot both be in the cut and (2) each leaf node must have itself or its ancestor in the cut. Such relationship allows us to design efficient tree searching algorithms, as we will see later.

{\bf Segment score:} Before we discuss how to score a \emph{global} segmentation, we first introduce how to score a \emph{local} segment. Given a local segment $C \subset X$, we define a function $f(C; \theta): C \mapsto [0,1]$ that predicts a given segment's ``objectness'', where $\theta$ represents the parameters. One can implement such a function with a PointNet++, where $\theta$ would represent weights of the PointNet++. We will discuss how to learn this function in Section~\ref{sec:learn}. For now let us assume it is given.

{\bf Segmentation score:} We now introduce how to score a \emph{global} segmentation. Given a global segmentation $P_X=\{C_i\}_{i=1}^M$, we define its score $F(P_X; \theta): P_X\mapsto [0,1]$ by aggregating over local objectness of its individual segments. Specifically, we introduce \emph{worst-case} segmentation and \emph{average-case} segmentation. Note that our objective can be made equivalent to \cite{uzunbas2016efficient} if we score a segmentation as the \emph{sum} of its local segment scores. As we see in Section~\ref{sec:results}, this objective produces much larger oversegmentation error.

\subsection{Worst-case segmentation}
\emph{Worst-case} segmentation scores a global segmentation as the \emph{worst} objectness among its local segments:
\begin{equation}
  \label{eq:worst}
  F_{\min}(P_X; \theta) = \min_i{f(C_i; \theta)}, i\in {1 \ldots M}
\end{equation}
where $P_X \in S_{X,T}$, $P_X=\{C_i\}_{i=1}^M$, and $C_i \subset X$. We define $P_{X, min}^*$ as the \textit{optimal worst-case segmentation} if
\begin{equation}
  \label{eq:optimal-worst}
  P_{X,\min}^* = \argmax_{P_X\in S_{X,T}} {F_{\min} (P_X; \theta)}
\end{equation}

It turns out the problem of finding optimal worst-case segmentation has optimal substructure (Theorem~\ref{theorem:optimal-worst}), allowing us to find the global optimum efficiently with dynamic programming (Algorithm~\ref{alg:optimal-worst}).

We briefly describe how the algorithm works. Given a set of points $X$ and a tree $T_X$, \textproc{OptMinSeg($X$, $T_X$)} (Algorithm~\ref{alg:optimal-worst}) produces an optimal worst-case segmentation $P^*_{X,\min}$ with score $F^*_{\min}(P^*_{X,\min{}};\theta)$. For simplicity, we refer to a node in the tree by the set of points it is associated with. The algorithm starts from the root node $X$ and chooses between a coarse segmentation ($\{X\}$) and a fine one. The fine segmentation will be the union of all $X$'s children's optimal worst-case segmentation, which can be computed recursively. The algorithm would first traverse down to the leaf nodes, representing the finest segmentation. Then it will make its way up, during which it finalizes optimal segmentations for each intermediate node by making local coarse vs.\ fine decisions. Eventually, it returns to the root node and produces an optimal worst-case global segmentation.

\begin{lemma}
  \label{lemma:min}
  Given pairs of non-empty sets that contain real numbers $(X_1, Y_1),\ldots, (X_n, Y_n)$,
  \begin{equation}
    \forall i, \min_{x \in X_i}{x} \leq \min_{y \in Y_i}{y} \Rightarrow \min_{x \in \cup_i X_i}{x} \leq \min_{y \in \cup_i Y_i}{y}
  \end{equation}
\end{lemma}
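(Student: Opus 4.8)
The plan is to reduce both sides of the desired inequality to minima over a finite collection of per-set minima, and then exploit the pointwise hypothesis at a single well-chosen index. First I would record the elementary identity that the minimum over a union of non-empty sets equals the minimum of the per-set minima: writing $a_i = \min_{x \in X_i} x$ and $b_i = \min_{y \in Y_i} y$, we have $\min_{x \in \cup_i X_i} x = \min_i a_i$ and $\min_{y \in \cup_i Y_i} y = \min_i b_i$. Each $a_i$ and $b_i$ is well defined precisely because every $X_i$ and $Y_i$ is non-empty. This rewriting turns the claim into the purely finite statement $\min_i a_i \leq \min_i b_i$, to be derived from the hypothesis $a_i \leq b_i$ for all $i$.

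The key step is then to select the index that achieves the minimum on the right-hand side rather than trying to compare the two unions element-by-element. Let $j$ be an index with $b_j = \min_i b_i$. I would then chain three relations: $\min_i a_i \leq a_j$ by the definition of the minimum over the index set, $a_j \leq b_j$ by the hypothesis instantiated at $i = j$, and $b_j = \min_i b_i$ by the choice of $j$. Composing these yields $\min_i a_i \leq \min_i b_i$, which is exactly the conclusion once the union identity is undone.

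I do not expect a genuine obstacle here; the lemma is essentially a bookkeeping fact about minima, and its role in the paper is to justify the optimal-substructure argument used by \textproc{OptMinSeg}. The one point that deserves care is the asymmetry of the argument: it suffices to minimize over the $Y$-side index and bound the corresponding $X$-term, so one should resist the temptation to look for a common minimizing index across the two unions (which need not exist). I would also note explicitly that it is the non-emptiness of each set that licenses treating every $\min$ as an attained value; in the intended application the $X_i$ and $Y_i$ are finite sets of objectness scores, so this is automatic.
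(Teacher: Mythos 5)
Your proof is correct. The paper states Lemma~\ref{lemma:min} without proof---it is invoked directly inside the proof of Theorem~\ref{theorem:optimal-worst}---so there is no argument of the paper's to compare against; your reduction of both sides to minima of per-set minima ($\min_i a_i$ versus $\min_i b_i$), followed by instantiating the hypothesis at an index $j$ achieving $b_j = \min_i b_i$ and chaining $\min_i a_i \leq a_j \leq b_j = \min_i b_i$, is precisely the elementary bookkeeping the paper takes for granted, and your caution against seeking a single common minimizing index for both unions is exactly the right point of care.
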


\begin{theorem}
  \label{theorem:optimal-worst}
  Given $C$ and $T_C$, Algorithm~\ref{alg:optimal-worst} finds the optimal segmentation $P_{C,min}^* = \argmax_{P_C \in S_{C,T}} F_{min} (P_C; \theta)$.
\end{theorem}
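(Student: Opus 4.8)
The plan is to prove correctness of Algorithm~\ref{alg:optimal-worst} by structural induction on the tree $T_C$, mirroring the recursion the algorithm performs. The inductive hypothesis will be that for every strict subtree rooted at a node $C'$, the recursive call \textproc{OptMinSeg}$(C', T_{C'})$ returns a segmentation achieving $\max_{P_{C'} \in S_{C',T}} F_{\min}(P_{C'}; \theta)$. For the base case I would take leaf nodes: a leaf admits only the trivial tree-consistent segmentation $\{C\}$ (the vertex-cut characterization forces the leaf itself into the cut), so the algorithm returns it and is optimal vacuously.

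For the inductive step, the first step is to argue that $S_{C,T}$ splits cleanly into two exhaustive and disjoint cases, using the vertex-cut characterization of tree-consistent segmentations. Either $C$ itself lies in the cut --- the \emph{coarse} option $\{C\}$ with score $f(C;\theta)$ --- or the cut lies strictly below $C$, in which case restricting the cut to each child subtree yields a valid tree-consistent segmentation $P_{C_j} \in S_{C_j,T}$ of every child $C_j$; conversely any choice of child segmentations glues back into a tree-consistent \emph{fine} segmentation of $C$. Next I would record the key decomposition that for a fine segmentation the segments are exactly the disjoint union of the children's segments, so $F_{\min}\bigl(\cup_j P_{C_j}; \theta\bigr) = \min_j F_{\min}(P_{C_j}; \theta)$.

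The heart of the argument is showing that the best fine segmentation is obtained by independently substituting each child's \emph{optimal} segmentation, which is precisely where Lemma~\ref{lemma:min} enters. For each child $i$, let $X_i$ be the set of segment scores under an arbitrary $P_{C_i}$ and $Y_i$ the set of segment scores under the optimal $P^*_{C_i}$; the inductive hypothesis gives $\min X_i = F_{\min}(P_{C_i};\theta) \le F_{\min}(P^*_{C_i};\theta) = \min Y_i$ for every $i$. Lemma~\ref{lemma:min} then lifts these per-child inequalities to $\min_{\cup_i X_i} \le \min_{\cup_i Y_i}$, i.e.\ $F_{\min}(\cup_i P_{C_i};\theta) \le F_{\min}(\cup_i P^*_{C_i};\theta)$, so the fine segmentation assembled from the children's optimal solutions dominates every other fine segmentation. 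Finally, since the algorithm returns whichever of the coarse score $f(C;\theta)$ and this best fine score $\min_i F_{\min}(P^*_{C_i};\theta)$ is larger, and these two cases exhaust $S_{C,T}$, the returned segmentation attains $\max_{P_C \in S_{C,T}} F_{\min}(P_C;\theta)$, completing the induction.

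I expect the main obstacle to be the fine-case optimality argument rather than the bookkeeping: one must be careful that the independence of the children's choices is what makes ``optimize each child separately'' coincide with ``optimize the min over children jointly'' --- a maximin swap that can fail without independence. Lemma~\ref{lemma:min} is tailored to discharge exactly this step, so the real work is setting up the correspondence between a segmentation's segment-score set and the sets $X_i, Y_i$ correctly, and confirming that the disjoint-union structure of the segments matches the $\cup_i$ in the lemma. The secondary subtlety is rigorously justifying the two-case partition of $S_{C,T}$ from the vertex-cut constraints, which I would handle first so that the decomposition of $F_{\min}$ over children is unambiguous.
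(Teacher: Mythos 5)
Your proposal is correct and follows essentially the same route as the paper's proof: structural induction with the same coarse-versus-fine case split, using Lemma~\ref{lemma:min} to lift per-child optimality to the union of children's segmentations, and concluding via the algorithm's max over the two options. The only cosmetic difference is that you argue optimality directly while the paper phrases the inductive step as a proof by contradiction (and, like you, it relies on---but does not belabor---the vertex-cut justification of the two-case decomposition).
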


\begin{proof}
  %TODO: fix counter of proof
  \label{proof:optimal-worst}
  %https://courses.engr.illinois.edu/cs173/fa2010/Lectures/trees.pdf
  Proof by structural induction.

  \noindent \textbf{Base:} When $N_C=\emptyset$, meaning $C$ corresponds to a leaf node in $T_C$, the algorithm returns $\{C\}$, which is the only segmentation in $S_{C,T}$ and obviously is optimal.

  \noindent \textbf{Induction:}
When $N_C\neq \emptyset$, we need to show that the algorithm will produce the optimal segmentation, i.e. $P^*_{C}$ and $F^*_{C}$, if it has access to the optimal segmentation for each of $C$'s child $C_i$, i.e. $P^*_{C_i}$ and $F^*_{C_i}$ (optimal substructure).

  Let $P_C$ be the segmentation that the algorithm produces for $C$ and let $F_C$ be its score. If $P_C$ were not optimal, there must exist a different segmentation $P''_C$ with score $F'_C$, s.t. $P'_C\neq P_C$ and $F'_C > F_C$. Moreover, $P'_C$ is either a trivial segmentation, i.e. $P'_C = \{C\}$ or the union of segmentations over each of $C$'s children nodes, i.e. $P'_C = \cup_i \{P'_{C_i}\}$.

  First, $P'_C$ is not a trivial segmentation. If we assume $P'_C=\{C\}$, we will have $F'_C=f(C;\theta)$. Since $P_C\neq P'_C$, the algorithm chooses $P_C$ over $\{C\}$, therefore, $F_C>f(C;\theta)$. This clearly contradicts with $F'_C > F_C$.

  Thus, $P'_C$ has to be the union of segmentations over each of $C$'s children node. According to the inductive hypothesis, the algorithm has the optimal segmentation over each of $C$'s children node, meaning $\forall i, F'_{C_i} \leq F^*_{C_i}$ or concretely
  \begin{equation}
      \forall i, \min_{z \in P'_{C_i}} f(z;\theta) \leq \min_{z \in P^*_{C_i}} f(z;\theta)
  \end{equation}
  Here, $z$ represents an arbitrary local segment from a segmentation over $C_i$. By applying Lemma~\ref{lemma:min}, we have
  \begin{equation}
  \label{eq:theorem1}
      \min_{z \in \cup_i P'_{C_i}} f(z;\theta) \leq \min_{z \in \cup_i P^*_{C_i}} f(z;\theta)
  \end{equation}

On one hand, $P'_C = \cup_i \{P'_{C_i}\}$ has a score of $F'_C=\min_{z \in \cup_i P'_{C_i}} f(z;\theta)$. On the other hand, the algorithm by design chooses the higher scoring one between $P_C=\{C\}$ with a score of $F_C=f(C; \theta)$ and $P_C=\cup_i P^*_{C_i}$ with a score of $F_C=\min_{z \in \cup_i P^*_{C_i}} f(z;\theta)$, ensuring that $F_C \geq \min_{z \in \cup_i P^*_{C_i}} f(z;\theta)$. With these and~\eqref{eq:theorem1}, we conclude $F_C \geq F'_C$, which contradicts the assumption $F'_C > F_C$.
\end{proof}

\begin{algorithm}[t]
  \caption{Optimal worst-case segmentation
    \label{alg:optimal-worst}}
  \begin{algorithmic}[1]
    \Function{OptMinSeg}{$C, T_C$}
    \Statex{\textbf{return} a segmentation $P_C$ with a score of $F_C$}
    \State{$P_C \gets \{C\}$}
    \State{$F_C \gets f(C; \theta)$}
    \State{$N_C \gets$ set of $C$'s children nodes in $T_C$}
    \If{$N_C \neq \emptyset$}
    \For{$C_i$ in $N_C$}
    \State{$T_{C_i} \gets$ subtree of $T_C$ rooted at $C_i$}
    \State{$P_{C_i}, F_{C_i}$ = \Call{OptMinSeg}{$C_i, T_{C_i}$}}
    \If{$F_{C_i} \leq F_C$}
    \Return{$P_C, F_C$}
    \EndIf
    \EndFor
    \If {$\min_i F_{C_i} > F_C$}
    \State{$P_C \gets \cup_i P_{C_i}$}
    \State{$F_C \gets \min_i F_{C_i}$}
    \EndIf
    \EndIf
    \Return{$P_C, F_C$}
    \EndFunction
  \end{algorithmic}
\end{algorithm}

{\bf Generality:} Our analysis makes no assumptions about the objectness function $f(C;\theta)$ except the fact that it cannot be affected by the partitioning of other segments. In particular, this would allow objectness to depend on contextual arrangement of surrounding points outside $C$ - e.g., $f(C,X;\theta)$.

{\bf Efficiency:} Given points $X$ and a tree $T_X$ with $N$ leaf nodes, Algorithm~\ref{alg:optimal-worst} guarantees to return the optimal worst-case segmentation after visiting every node in the tree. In practice, it might not visit all nodes. Instead, it skips the rest of sub-trees whenever one sub-tree exhibits lower score than the coarse segmentation (line 9 in Algorithm~\ref{alg:optimal-worst}). The algorithm's complexity is linear in $N$ despite the fact that the search space is exponential in $N$.

\subsection{Average-case segmentation}
\emph{Average-case} segmentation scores a global segmentation as the \emph{average} objectness among its local segments:
\begin{equation}
  \label{eq:average}
  F_{\avg{}}(P_X; \theta) = \frac{1}{M} \sum_{i=1}^M f(C_i; \theta)
\end{equation}
where $P_X \in S_{X,T}$, $P_X=\{C_1,\ldots,C_M\}$, and $C_i \subset X$.
We define $P_{X,\avg{}}^*$ as an \emph{optimal average-case segmentation} if
\begin{equation}
  \label{eq:average}
  P_{X,\avg{}}^* = \argmax_{P_X\in S_{X,T}} {F_{\avg{}} (P_X; \theta)}
\end{equation}

It turns out that the problem of finding the optimal \emph{average-case} segmentation does not have optimal substructure, unlike \emph{worst-case} segmentation, meaning a locally optimal partitioning might no longer be optimal when considering global partitioning. Formally speaking, Lemma~\ref{lemma:min} no longer holds once $\min$ is changed to $\avg$.

Despite without optimal substructure, we apply a similar greedy searching algorithm. The main difference is how we aggregate local scores. Though greedily averaging local scores might lead to myopic decisions in certain situations~(Figure~\ref{fig:counter}), it performs quite well in practice (Section~\ref{sec:exp}).

\subsection{Learning the objectness function}
\label{sec:learn}

We have discussed segmentation algorithms under the assumption that we already have access to an objectness function $f(C;\theta)$, which predicts an objectness score for a given point cloud. We now introduce how to learn this function. Despite there has been a line of work that focuses on learning better representation, including Kd-networks~\cite{klokov2017escape}, PointCNN~\cite{li2018pointcnn}, EdgeConv~\cite{wang2019dynamic}, PointConv~\cite{wu2019pointconv}, just to name a few, we choose a simple PointNet++ to parameterize such an objectness function as a proof of concept. Below, we talk about how to learn a PointNet++ as a regressor to predict objectness score.

{\bf Ground truth objectness:} First, we must define regression target, i.e.\ ground truth objectness, of a given segment $C$. Suppose we have ground truth segmentation $P^{gt}=\{C^{gt}_1, \ldots, C^{gt}_{L}\}$, where $L$ is the number of ground truth segments. We can define $C$'s target objectness as the largest point IoU between itself and any ground truth segment~\eqref{eq:iou}.

\begin{equation}
\label{eq:iou}
  Objectness(C, P^{gt}) = \max_{l=1,\ldots,L} \frac{|C \cap C^{gt}_l|}{|C \cup C^{gt}_l|}
\end{equation}

Such a definition of objectness is only reasonable if points are uniformly distributed in space. In practice, 3D sensors (e.g. LiDAR) tend to produce denser points near the sensor. In consequence, the objectness will be heavily influenced by the partitioning of points closer to the sensor. For example, imagine two objects are segmented into one segment. Suppose one object has $n_1$ points and the other has $n_2$. If we use vanilla IoU as objectness, this segment would score $\frac{\max(n_1,n_2)}{n_1+n_2}$. When $n_1 \gg n_2$, the score could be really close to 1 despite it clearly introduces an under-segmentation error. To compensate such bias towards nearby objects, we propose a simple modification to IoU as in \eqref{eq:weighted-iou}.

\begin{equation}
\label{eq:weighted-iou}
  Objectness(C, P^{gt}) = \max_{l=1,\ldots,L} \frac{\sum_{x \in C \cap C^{gt}_l} x^T x  }{\sum_{x \in C \cup C^{gt}_l} x^T x }
\end{equation}
where $x^T x$ represents a point $x$'s squared distance to sensor origin. \eqref{eq:iou} is a special case, where $x^Tx$ is replaced with $1$.

\begin{figure}[t]
    \centering
    \includegraphics[width=\linewidth]{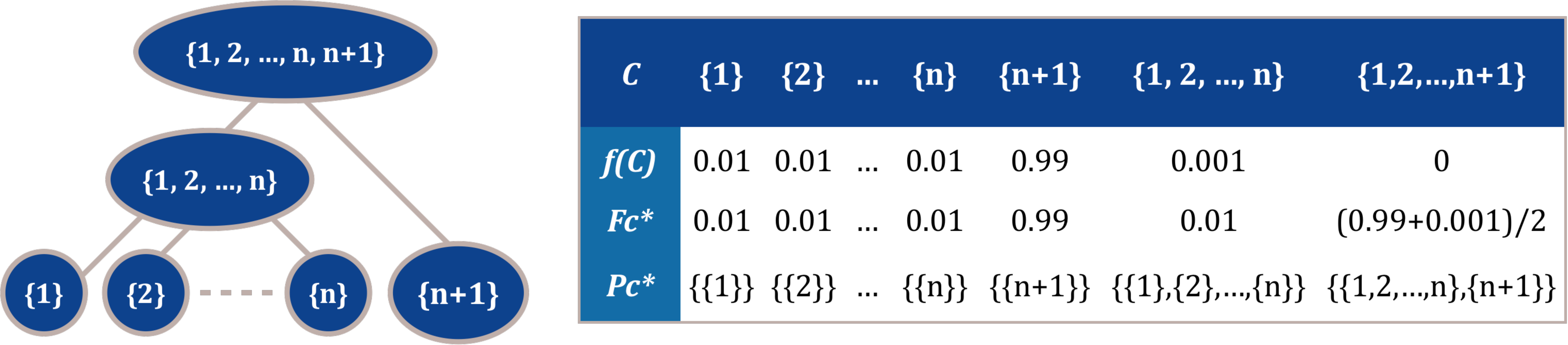}
    \caption{We illustrate why average-case segmentation does not have optimal substructure. We plot a tree on the left and show local objectness scores on the right. In this case, the optimal average-case segmentation of the root node, i.e. \{\{1,2,\ldots,n\},\{n+1\}\} cannot be formed by the optimal average-case segmentations of its children nodes, i.e. \{\{1\},\{2\},\ldots,\{n\}\} and \{\{n+1\}\}.}
    \label{fig:counter}
\end{figure}

{\bf Implementation:} We train a PointNet++ w/ multi-scale grouping (MSG)~\cite{qi2017pointnet++} for learning the objectness function. Starting from the off-the-shelf architecture, we replaced the classifier with a regressor that produces a real-value given an input point cloud. We applied a sigmoid function to convert the regression output to numbers between [0,1]. Finally, we compute the mean-squared error between prediction and ground truth objectness and perform backprop. In terms of preprocessing, we follow \cite{qi2018frustum} to make sure the input cloud is centered at origin and rotated based on the viewpoint. To facilitate batch processing, we follow the standard practice for PointNet++ and re-sample each segment to 1024 points.

\subsection{Building tree hierarchies}
\label{sec:tree}

We have discussed segmentation algorithms under the assumption that we have access to a tree hierarchy. Now we introduce how to build such a tree hierarchy given a set of points $X$. One natural approach is agglomerative clustering. After we define a metric (i.e.\ pairwise distance between two points) and a linkage criteria (i.e.\ pairwise distance between two sets of points), we can start from $\{\{x_1\},\ldots, \{x_N\}\}$ and keep merging the closest pair of point sets by taking the union over them, until all points are merged into one set. Such an approach produces a tree in a bottom-up fashion.

This approach tends to create tree hierarchies with very fine granularity, e.g.\ one node may differ from another with only one point of difference. As we have mentioned, our segmentation algorithms need to evaluate the objectness of every node in the tree. From an efficiency point of view, we would like to build a coarser tree whose leaf nodes are segments rather than individual points. Moreover, adjacent nodes should differ from each other much more.

{\bf Implementation:} We build tree hierarchies by applying Euclidean Clustering~\cite{rusu2010semantic} recursively in a top-down fashion with a list of decreasing $\epsilon$. Since Euclidean Clustering finds connected components w.r.t.\ a distance threshold $\epsilon$, we start with the largest $\epsilon$ that defines the most coarse connected components. Then, we apply Euclidean Clustering with a smaller $\epsilon$ within each connected component. This produces a multiple-tree top-down hierarchy. In our experiments, we use $\epsilon \in \{2m, 1m, 0.5m, 0.25m\}$ to build tree hierarchies for both training and testing. During training, we extract segments out of tree hierarchies built with the same parameters to form our training set for learning the objectness function. During testing, we apply the same learned objectness function in both worst-case semgentation and average-case segmentation.

\begin{figure*}
\centering
    (a)\hspace{.5\linewidth}(b)\\[.1em]
    \includegraphics[clip,trim={0 10cm 0 5cm},width=.496\linewidth]{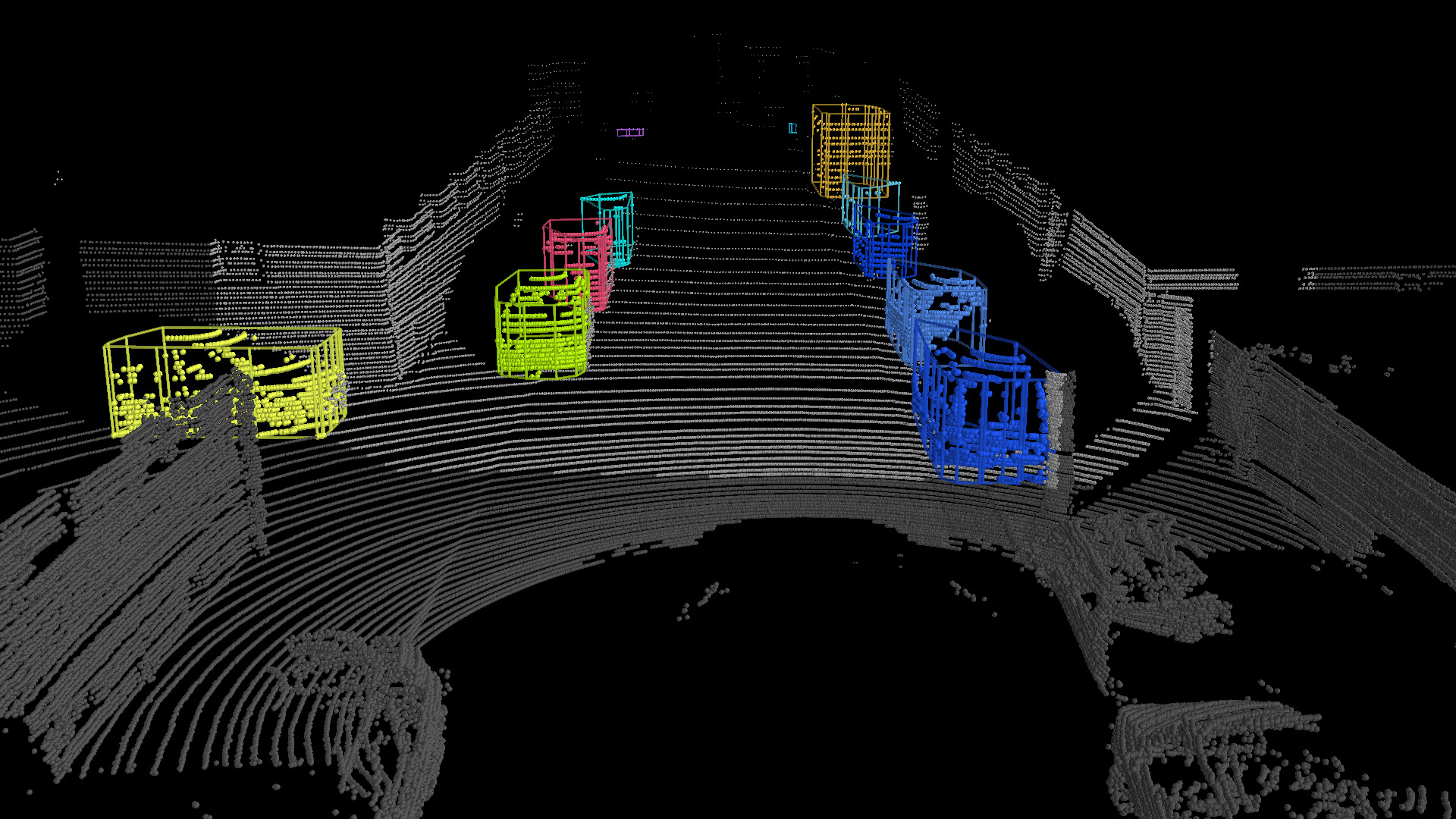}
    \includegraphics[clip,trim={0 10cm 0 5cm},width=.496\linewidth]{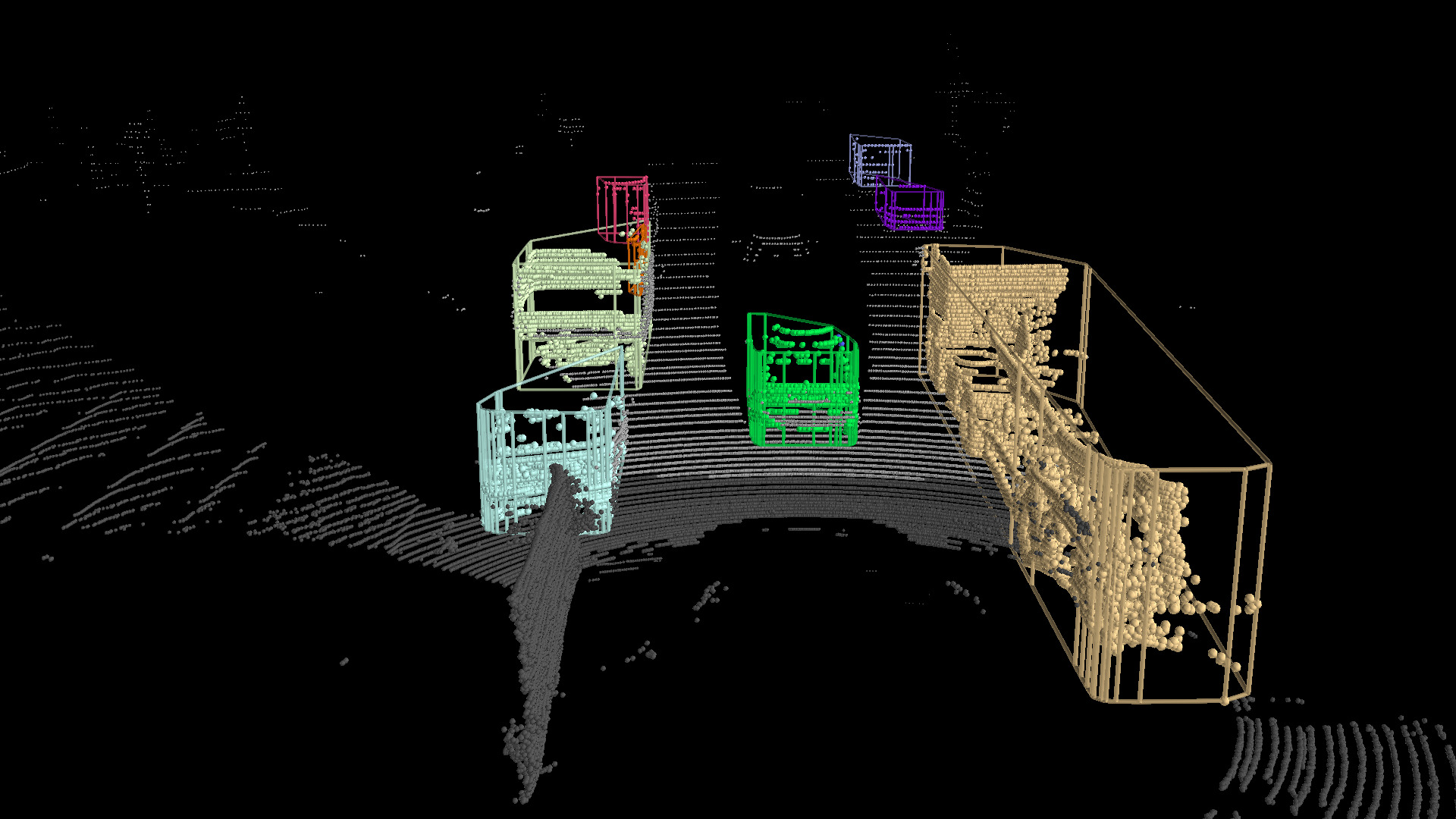}\\[.1em]
    \includegraphics[clip,trim={0 10cm 0 5cm},width=.496\linewidth]{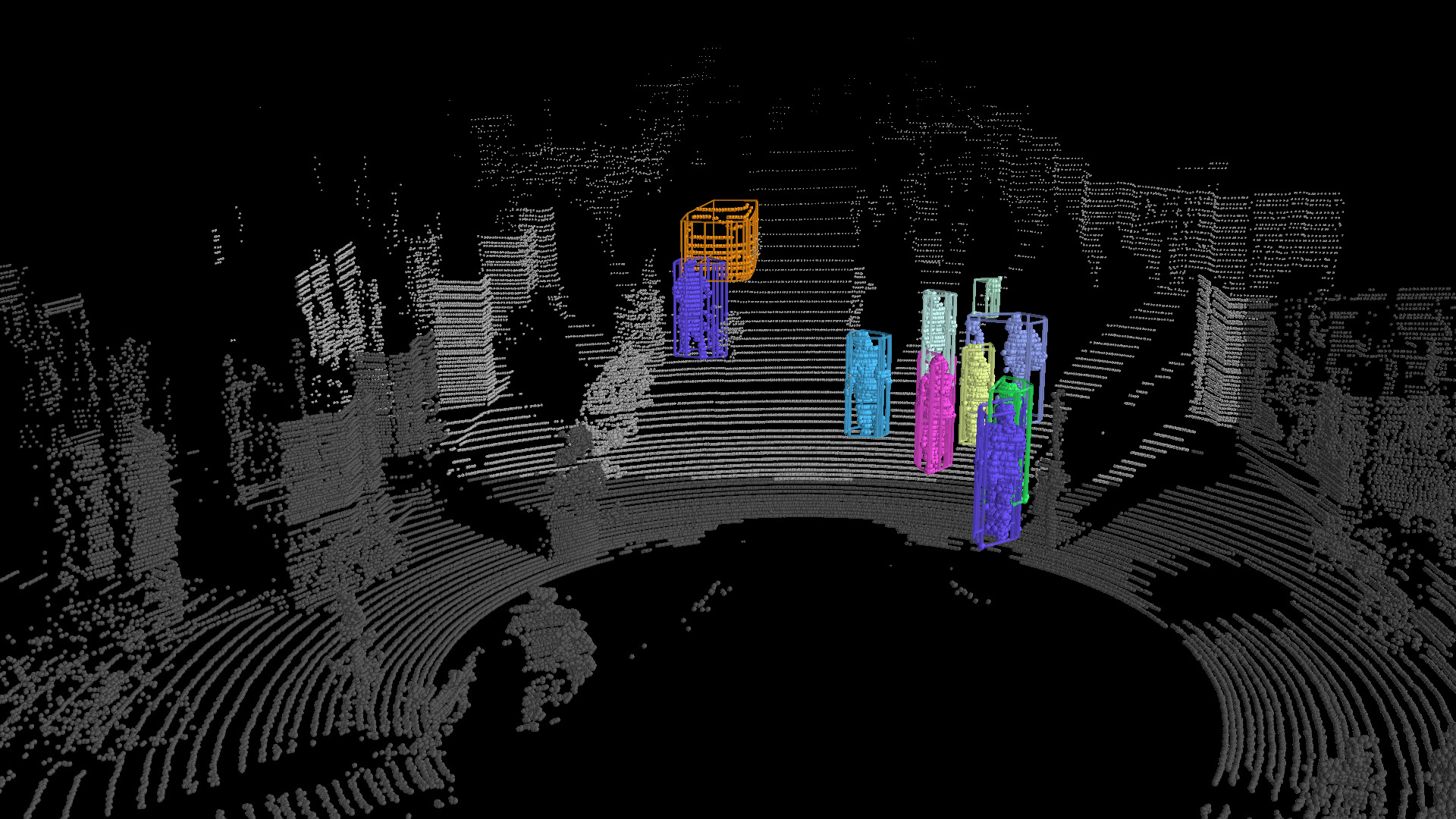}
    \includegraphics[clip,trim={0 10cm 0 5cm},width=.496\linewidth]{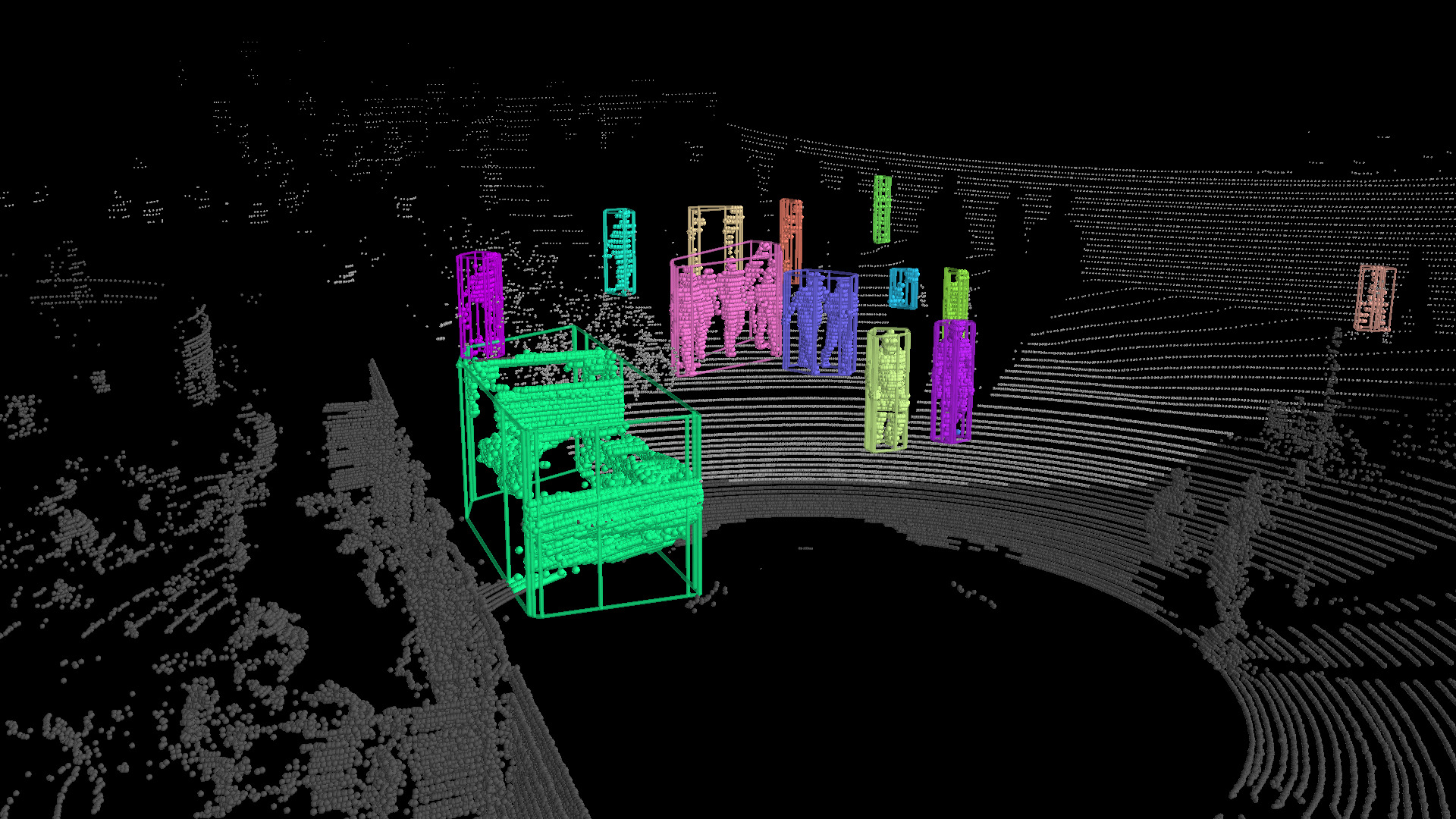}
    % \\ \vspace{-.3em}
    (c)\hspace{.5\linewidth}(d)
    \caption{We visualize more qualitative results of the proposed algorithm Ours(avg) on KITTI\@. In (a), we show a common scenario where there are parked cars on both sides of the road. In (b), we show a rare scenario where there is an oversized tank truck in the right lane. In (c), we show a scenario where a group of pedestrians walking in front of the autonomous vehicle. In (d), we show a typical failure case where pedestrians walk closely side by side. For such cases, there is often no perfect solution within the search space generated by EC.}
    \label{fig:examples}
\end{figure*}

\section{Experiments}
\label{sec:exp}

For evaluation, we repurpose the KITTI object detection benchmark for point cloud segmentation following the setup in~\cite{held2016probabilistic}. In our case, 3D objects do not physically overlap with one another. Therefore, we use ground truth 3D bounding boxes to produce ground truth segmentation. To do so, we first remove all points outside ground truth 3D bounding boxes (Figure~\ref{fig:splash}). Then we treat points within one ground truth 3D bounding box as the ground truth segment for the object. On KITTI, there exist ground truth 3D bounding boxes that overlap with each other. We ignore such segments during evaluation, since it is not clear how to define the ground-truth for the points in such bounding boxes~\cite{held2016probabilistic}. We follow \cite{chen2017multi} for splitting data into training and validation.

{\bf Evaluation protocol} We follow evaluation metrics introduced by Held et al.~\cite{held2016probabilistic}, which consists of two errors, under-segmentation error and over-segmentation error. Given ground truth segmentation $P^{gt}=\{C^{gt}_1,\ldots, C^{gt}_L\}$, we compute under-segmentation error $U$ and over-segmentation error $O$ given an output segmentation $P=\{C_1,\ldots, C_M\}$ as:
\begin{align}
    U &= \frac{1}{L} \sum_{l=1}^L \mathbf{1} ( \frac{|C_{i^*} \cap C_l^{gt}|}{|C_{i^*}|} < \tau_U ) \\
    O &= \frac{1}{L} \sum_{l=1}^L \mathbf{1} ( \frac{|C_{i^*} \cap C_l^{gt}|}{|C_l^{gt}|} < \tau_O )
\end{align}
with
\begin{equation}
  \label{eq:argmax}
  i^* = \argmax_{i=1}^M{|C_i \cap C^{gt}_l|}
\end{equation}
where $\mathbf{1}(\cdot)$ is an indicator function and $\tau_U, \tau_O$ are both constant thresholds. We set $\tau_U=2/3$ and $\tau_O=1$ following \cite{held2016probabilistic}. We ignore ground truth objects with overlapping bounding boxes ($529/20870\approx 2.5\%$) and those with 0 points ($238/20870\approx 1.1\%$) inside their 3D boxes. Other than these, we compute segmentation errors over all objects from all classes and also provide errors focusing on objects within 15m. We also adopt a slightly modified evaluation: instead of skipping objects with overlapping boxes entirely, we only ignore their overlapped regions.

\begin{table*}[t]
  \captionof{table}{\footnotesize \label{tab:all} Segmentation errors on KITTI Val. Left shows under-, over-segmentation, and total error. Right shows total error on a per-class basis. }
  \resizebox{\linewidth}{!}{
    \begin{tabular}{lrrrrrr|rrrrrrrrrrrrrrrrrr}
    \toprule
    \multirow{2}{*}{Method} & \multicolumn{2}{c}{under} & \multicolumn{2}{c}{over} & \multicolumn{2}{c|}{\textbf{total}} & \multicolumn{2}{c}{car} & \multicolumn{2}{c}{van} & \multicolumn{2}{c}{truck} & \multicolumn{2}{c}{pedestrian} & \multicolumn{2}{c}{person sitting} & \multicolumn{2}{c}{cyclist} & \multicolumn{2}{c}{\emph{tram}} & \multicolumn{2}{c}{misc} & \multicolumn{2}{c}{\textbf{mean}} \\
    \cmidrule(lr){2-3}\cmidrule(lr){4-5}\cmidrule(lr){6-7}\cmidrule(lr){8-9}\cmidrule(lr){10-11}\cmidrule(lr){12-13}\cmidrule(lr){14-15}\cmidrule(lr){16-17}\cmidrule(lr){18-19}\cmidrule(lr){20-21}\cmidrule(lr){22-23}\cmidrule(lr){24-25}
    & \multicolumn{1}{c}{all} & \multicolumn{1}{c}{15m} & \multicolumn{1}{c}{all} & \multicolumn{1}{c}{15m} & \multicolumn{1}{c}{all} & \multicolumn{1}{c|}{15m} & \multicolumn{1}{c}{all} & \multicolumn{1}{c}{15m} & \multicolumn{1}{c}{all} & \multicolumn{1}{c}{15m} & \multicolumn{1}{c}{all} & \multicolumn{1}{c}{15m} & \multicolumn{1}{c}{all} & \multicolumn{1}{c}{15m} & \multicolumn{1}{c}{all} & \multicolumn{1}{c}{15m} & \multicolumn{1}{c}{all} & \multicolumn{1}{c}{15m} & \multicolumn{1}{c}{all} & \multicolumn{1}{c}{15m} & \multicolumn{1}{c}{all} & \multicolumn{1}{c}{15m} & \multicolumn{1}{c}{all} & \multicolumn{1}{c}{15m} \\
    \midrule
    EC(2m) & 23.01 & 42.90 & 5.38 & 0.46 & 28.4 & 43.4 & 24.4 & 37.4 & 18.3 & 21.1 & 29.3 & \textbf{18.5} & 55.2 & 68.6 & 67.7 & 66.7 & 28.7 & 53.0 & \textbf{93.8} & 55.0 & 28.6 & 33.6 & 43.2 & 44.2 \\
    EC(1m) & 9.04 & 21.83 & 25.49 & 7.59 & 34.5 & 29.4 & 31.4 & 21.4 & 44.4 & 22.1 & 51.6 & 59.3 & 37.9 & 50.7 & 59.6 & 60.2 & 17.4 & 35.6 & 118.8 & 100.0 & 37.4 & 33.6 & 49.8 & 47.8 \\
    EC(0.5m) & 3.20 & 7.67 & 65.21 & 51.11 & 68.4 & 58.8 & 74.5 & 67.7 & 80.9 & 73.2 & 79.3 & 92.6 & 26.2 & 34.9 & \textbf{42.4} & \textbf{41.9} & 25.5 & 16.1 & 118.3 & 100.0 & 63.7 & 41.0 & 63.9 & 58.4 \\
    EC(0.25m) & 1.13 & 2.69 & 91.05 & 82.22 & 92.2 & 84.9 & 97.0 & 98.6 & 98.8 & 99.1 & 98.7 & 100.0 & 47.7 & 41.6 & 56.6 & 57.0 & 87.5 & 56.4 & 118.8 & 100.0 & 94.3 & 81.1 & 87.4 & 79.2 \\
    EC(all)* & \textit{7.83} & \textit{12.89} & \textit{5.38} & \textit{0.46} & \textit{13.2} & \textit{13.3} & \textit{10.9} & \textit{11.7} & \textit{13.6} & \textit{5.6} & \textit{29.3} & \textit{18.5} & \textit{14.3} & \textit{20.1} & \textit{27.3} & \textit{26.9} & \textit{10.4} & \textit{14.1} & \textit{93.8} & \textit{55.0} & \textit{13.6} & \textit{3.3} & \textit{26.6} & \textit{19.4} \\
    \midrule
    % AVOD & 25.22 & 14.63 & 57.20 & 68.87 & 82.4 & 83.5 & 81.9 & 85.7 & 86.4 & 64.8 & 78.8 & 29.6 & 82.1 & 87.0 & 93.9 & 93.5 & 88.3 & 87.9 & \textbf{56.7} & \textbf{20.0} & 86.4 & 63.1 & 81.8 & 66.5 \\
    AVOD & - & - & - & - & - & - & 81.9 & 85.7 & - & - & - & - & 82.1 & 87.0 & - & - & 88.3 & 87.9 & - & - & - & - & - & - \\
    % AVOD++ & 4.20 & 8.81 & 15.30 & 10.03 & 19.5 & 18.8 & 12.5 & 10.7 & 38.6 & 27.2 & 51.6 & 59.3 & 25.0 & 32.6 & 59.6 & 60.2 & 13.1 & 18.8 & 118.8 & 100.0 & 37.2 & 33.6 & 44.6 & 42.8 \\
    AVOD++ & - & - & - & - & - & - & 12.5 & 10.7 & - & - & - & - & 25.0 & 32.6 & - & - & 13.1 & 18.8 & - & - & - & - & - & - \\
    % PointPillars++ & 1.38 & 1.90 & 28.65 & 26.05 & 30.0 & 27.9 & 21.1 & 18.5 & 52.0 & 57.3 & 65.2 & 96.3 & 33.7 & 34.0 & 60.6 & 63.4 & 40.5 & 31.5 & 120.5 & 100.0 & 63.1 & 68.9 & 57.1 & 58.7 \\
    PointPillars++ & - & - & - & - & - & - & 21.1 & 18.5 & - & - & - & - & 33.7 & 34.0 & - & - & 40.5 & 31.5 & - & - & - & - & - & - \\
    % PointRCNN & 5.94 & 13.81 & 9.99 & 4.05 & 15.9 & 17.9 & 7.6 & 5.3 & 22.5 & 6.1 & 49.5 & 51.9 & 38.0 & 50.7 & 63.6 & 62.4 & 17.4 & 35.6 & 118.3 & 100.0 & 32.5 & 27.0 & 43.7 & 42.4 \\
    PointRCNN++ & - & - & - & - & - & - & 7.6 & 5.3 & - & - & - & - & - & - & - & - & - & - & - & - & - & - & - & - \\
    \midrule
    % SECOND++(4) & 2.78 & 4.98 & 11.52 & 7.29 & 14.3 & 12.3 & 7.9 & 3.9 & 22.4 & 7.5 & 48.4 & 66.7 & 22.7 & 29.8 & 58.6 & 59.1 & 10.6 & 14.1 & 118.8 & 100.0 & 33.0 & 23.8 & 40.3 & 38.1 \\
    SECOND++(4) & - & - & - & - & - & - & 7.9 & 3.9 & 22.4 & 7.5 & - & - & 22.7 & 29.8 & - & - & 10.6 & 14.1 & - & - & - & - & - & - \\
    % + Ext. Range & 2.68 & 4.49 & 10.48 & 7.56 & 13.2 & 12.0 & \textbf{7.0} & 4.3 & 18.3 & 7.0 & 44.2 & 55.6 & 23.2 & 29.5 & 55.6 & 55.9 & 10.0 & \textbf{10.1} & 116.5 & 100.0 & 30.8 & 20.5 & 38.2 & 35.4 \\
    + Ext. Range & - & - & - & - & - & - & \textbf{7.0} & 4.3 & 18.3 & 7.0 & - & - & 23.2 & 29.5 & - & - & 10.0 & \textbf{10.1} & - & - & - & - & - & - \\
    % + BG Removal & 2.04 & 2.58 & 13.38 & 8.18 & 15.4 & 10.8 & 9.7 & \textbf{3.7} & 26.0 & 8.9 & 46.1 & 55.6 & 21.5 & \textbf{26.2} & 36.4 & 36.6 & 11.9 & 15.4 & 118.3 & 100.0 & 26.8 & 16.4 & 37.1 & 32.9 \\
    + BG Removal & - & - & - & - & - & - & 9.7 & \textbf{3.7} & 26.0 & 8.9 & - & - & 21.5 & \textbf{26.2} & - & - & 11.9 & 15.4 & - & - & - & - & - & - \\
    % + Both & 2.12 & 2.61 & 12.19 & 8.13 & 14.3 & \textbf{10.7} & 9.1 & \textbf{3.7} & 20.6 & 8.0 & 41.9 & 55.6 & 21.5 & 26.5 & \textbf{35.4} & \textbf{35.5} & 12.0 & 14.1 & 115.6 & 100.0 & 24.6 & 18.0 & 35.1 & 32.7 \\
    + Both & - & - & - & - & - & - & 9.1 & \textbf{3.7} & 20.6 & 8.0 & - & - & 21.5 & 26.5 & - & - & 12.0 & 14.1 & - & - & - & - & - & - \\
    \midrule
    SECOND++(8) & 2.52 & 3.89 & 11.39 & 8.02 & 13.9 & 11.9 & 7.9 & 4.9 & 23.2 & 8.0 & 43.2 & 51.9 & 22.3 & 26.8 & 54.5 & 54.8 & 9.6 & 13.4 & 117.9 & 100.0 & 26.1 & 15.6 & 38.1 & 34.4 \\
    + Ext. Range & 2.59 & 4.27 & 10.12 & 8.13 & \textbf{12.7} & 12.4 & 7.2 & 4.7 & 17.6 & 8.5 & 34.0 & 48.1 & 23.2 & 29.5 & 57.6 & 58.1 & \textbf{9.5} & 12.1 & 112.9 & 100.0 & 25.0 & 18.9 & 35.9 & 35.0 \\
    + BG Removal & 2.30 & 3.15 & 13.03 & 8.73 & 15.3 & 11.9 & 9.8 & 4.8 & 24.8 & 5.6 & 45.8 & 63.0 & 22.0 & 28.5 & 44.4 & 44.1 & 11.7 & 14.8 & 114.7 & 95.0 & 23.5 & 17.2 & 37.1 & 34.1 \\
    + Both & 2.24 & 3.07 & 11.94 & 8.65 & 14.2 & \textbf{11.7} & 9.2 & 4.5 & 20.1 & 6.1 & 38.2 & 63.0 & 22.5 & 28.5 & 44.4 & 44.1 & 11.5 & 13.4 & 111.2 & 95.0 & 20.4 & 18.0 & 34.7 & 34.1 \\
    \midrule
    Ours(min) & 13.13 & 21.42 & 5.65 & 0.60 & 18.8 & 22.0 & 15.8 & 18.0 & 15.5 & 11.3 & 29.3 & \textbf{18.5} & 27.5 & 35.6 & 53.5 & 51.6 & 17.9 & 26.2 & 93.8 & 55.0 & 20.7 & \textbf{11.5} & 34.3 & 28.5 \\
    Ours(avg) & 8.64 & 12.75 & 7.89 & 4.73 & 16.5 & 17.5 & 13.7 & 14.7 & 14.9 & 7.0 & 30.1 & 29.6 & \textbf{20.9} & \textbf{26.2} & \textbf{42.4} & \textbf{41.9} & 16.4 & 19.5 & 94.2 & 55.0 & 20.0 & \textbf{11.5} & 31.6 & 25.7 \\
    \midrule
    Ours(avg) w/ \\
    (2.7, 0.9, 0.3)m & 11.49 & 15.14 & 6.17 & 5.22 & 17.7 & 20.4 & 15.8 & 17.3 & 13.9 & 13.6 & 23.6 & 25.9 & 23.6 & 29.1 & 47.5 & 47.3 & 18.3 & 26.2 & 70.5 & 20.0 & 17.4 & 15.6 & 28.8 & 24.4 \\
    (2.4, 1.2, 0.6, 0.3)m & 9.30 & 11.96 & 5.93 & 5.90 & 15.2 & 17.9 & 12.8 & 15.1 & 11.6 & 7.5 & 27.2 & 29.6 & 21.8 & 26.7 & \textbf{42.4} & \textbf{41.9} & 15.7 & 19.5 & 79.5 & 25.0 & \textbf{16.0} & 16.4 & 28.4 & 22.7 \\
    (3.2, 1.6, 0.8, 0.4, 0.2)m & 10.53 & 10.60 & 4.52 & 5.36 & 15.1 & 16.0 & 12.8 & 12.8 & \textbf{10.8} & \textbf{4.2} & \textbf{21.5} & 22.2 & 23.3 & 27.4 & \textbf{42.4} & \textbf{41.9} & 17.4 & 16.8 & \textbf{66.1} & \textbf{10.0} & 16.9 & 13.9 & \textbf{26.4} & \textbf{18.7}\\
    \bottomrule
    \end{tabular}
  }
\end{table*}

\subsection{Baselines}
\label{sec:baselines}

{\bf Euclidean clustering:} We use Euclidean clustering with 4 different distance threshold $\{2m, 1m, 0.5, 0.25m\}$ to build trees of segments, which defines the space of possible segmentations for our approach. Therefore, it makes sense to include all 4 of them as baselines and see if our approach indeed finds a better solution.% than the best of them.

{\bf State-of-the-art 3D detectors:} We compare our approach to AVOD~\cite{ku2018joint}, PointPillars~\cite{lang2018pointpillars}, PointRCNN~\cite{shi2019pointrcnn}, and SECOND~\cite{yan2018second}. We follow the off-the-shelf training and testing setting as closely as possible. For AVOD, we re-train a LiDAR-only car detector and a LiDAR-only people (pedestrian and cyclist) detector following official implementation\footnote{\url{https://github.com/kujason/avod}}. For PointPillars, we re-train a detector that simultaneously detects cars and people (pedestrian and cyclist) following an author-endorsed implementation\footnote{\url{https://github.com/traveller59/second.pytorch}}. For PointRCNN, we evaluate the official pre-trained car model as there are no available ones for other classes within its official implementation\footnote{\url{https://github.com/sshaoshuai/pointrcnn}}. For SECOND, since it is our best performing baseline, besides re-training the off-the-shelf model, we also explore various ways to improve its performance. By design, these detectors output class-specific bounding box detection. To produce class-agnostic segmentations, we ignore the class label and follow a greedy procedure: We start with the highest scoring bounding box and group all points within the box as one segment. We then remove those points and move onto the next highest scoring detection. We repeat until exhausting either detections or 3D points. In the end, we might still not have every point assigned to a segment. A simple fix is grouping leftover points as a new segment. We discuss a much better alternative approach below.

{\bf Detector++:} A better approach to handling missed detection is to fall back to clustering. Specifically, we apply Euclidean Clustering (EC) with a fixed $\epsilon$ on all leftover points, producing a set of leftover segments. For each leftover segment, we check if it can merged into an existing detection segment, using the criteria of whether the smallest pairwise distance between two segments is smaller than the threshold $\epsilon$. If so, we merge the leftover segment into the detection segment. We refer to such baselines as Detector++ (e.g. AVOD++ etc.).

\subsection{Results}
\label{sec:results}

We first present qualitative examples of our approach segmenting rare objects on KITTI Val, as shown in Figure~\ref{fig:examples}. For quantitative evaluation, we present both per-class and overall segmentation errors in Table~\ref{tab:all}.

{\bf Ours(min) vs. Ours(avg):} We label the optimal worst-case segmentation as Ours(min) and the average-case segmentation as Ours(avg). Ours(avg) consistently outperforms Ours(min) in terms of the total error. Ours(min) produces a much lower over-segmentation error but a much higher under-segmentation error, suggesting it makes more mistakes of grouping different objects into one segment and less mistakes of splitting points from one single object into multiple segments. The cause of such behavior might be due to the risk-averse objective of optimal worst-case segmentation. However, current evaluation does not emphasize the worst-case performance, instead, it measures the average performance over all objects. We observe that if we evaluate the worst-case objectness (Section~\ref{sec:additional-eval}), Ours(min) does outperform both Ours(avg) and AVOD++.

{\bf Ours vs. Euclidean Clustering:} We label Euclidean Clustering as ``EC($\epsilon$)'', where $\epsilon$ represents the distance threshold (meter). All together, they define a segment hierarchy. We construct a pool of segments that contains every node (segment) in the hierarchy and call this ``EC(all)*''. This serves as a \emph{unreachable} upper-bound, since segments from such a pool overlap with each other, which violates the non-disjoint constraint of a valid partition. Nonetheless, it shows that there gap between our proposed method and the upper bound is relatively small (3-4\%), suggesting plenty of room left for improvement in creating better hierarchies.

{\bf Detector++ vs. Detector:} We focus on AVOD to demonstrate the improvement of Detector++ over Detector. AVOD produces much larger oversegmentation errors, likely due to imprecisely localized 3D bounding boxes. For example, when a 3D bounding box is predicted smaller than it should be, the resultant segment might miss points on the edge, leading to oversegmentation. AVOD++ is designed to fix this issue and dramatically improves the oversegmentation error. The undersegmentation errors also improves significantly from AVOD to AVOD++, likely due to successfully segmenting objects that are completely missed by detections.

{\bf Ours vs. Detector++:} SECOND++ performs the best among all Detector++ baselines and also achieves the lowest overall total error among all methods. However, if we break down total segmentation errors on a per-class basis, our approaches perform much better than SECOND++. Such difference is due to a skewed data distribution. For example, 68\% objects are labeled as car while only 3\% are labeled as misc. SECOND++ performs better on common classes such as car and ours perform better on rare ones such as misc.

{\bf Runtime analysis:} Our algorithm requires running PointNet++ on every candidate segment in order to compute its objectness. In practice, one frame from KITTI Val, which contains $68 (\sigma=42)$ segments on average, takes about $0.19s (\sigma=0.06s)$ to process on a single GTX 1080.

\subsection{Additional evaluation protocols}
\label{sec:additional-eval}

\begin{table}[t]
  \centering
  \caption{Instance segmentation AP[@.5:.95:.05] on KITTI Val.}
  \label{tab:instance}
  \resizebox{\linewidth}{!}
  {
  \begin{tabular}{lcccccccccc}
    \toprule
    & car & van & trk & ped & psit & cyc & tram & misc & \textbf{mean}\\
    \midrule
    % AVOD(LiDAR) & 64.4 & 10.3 & 2.5 & 31.1 & 0.1 & 15.5 & 1.2 & 1.0 & 15.8 \\
    AVOD & 64.4 & - & - & 31.1 & - & 15.5 & - & - & - \\
    % AVOD++ & 91.6 & 38.6 & 8.7 & 51.6 & 0.4 & 41.6 & 1.3 & 10.2 & 30.5 \\
    AVOD++ & 91.6 & - & - & 51.6 & - & 41.6 & - & - & - \\
    % PointPillars++ & 91.4 & 36.3 & 3.6 & 55.2 & 1.7 & 55.9 & 0.8 & 8.5 & 31.7 \\
    PointPillars++ & 91.4 & - & - & 55.2 & - & 55.9 & - & - & - \\
    % PointRCNN++ & 95.2 & 68.7 & 14.5 & 15.8 & 0.3 & 13.8 & 2.7 & 21.0 & 29.0 \\
    PointRCNN++ & 95.2 & - & - & - & - & - & - & - & - \\
    \midrule
    % SECOND++(4) & 95.1 & 68.9 & 19.4 & 68.6 & 2.0 & 65.9 & 3.3 & 20.8 & 43.0 \\
    SECOND++(4) & 95.1 & 68.9 & - & 68.6 & - & 65.9 & - & - & - \\
    % + Range & 95.8 & 75.4 & 26.0 & 70.2 & 2.1 & 68.1 & 6.5 & 24.5 & 46.1 \\
    + Ext. Range & 95.8 & 75.4 & - & 70.2 & - & 68.1 & - & - & - \\
    % + BG Removal & 95.3 & 74.0 & 20.9 & 77.5 & 12.5 & 71.8 & 6.0 & 36.7 & 49.3 \\
    + BG Removal & 95.3 & 74.0 & - & 77.5 & - & 71.8 & - & - & - \\
    % + Both & \textbf{96.0} & \textbf{82.0} & 39.6 & \textbf{78.1} & 12.2 & \textbf{72.9} & 9.1 & 42.7 & 54.1 \\
    + Both & \textbf{96.0} & \textbf{82.0} & - & \textbf{78.1} & - & \textbf{72.9} & - & - & - \\
    \midrule
    SECOND++(8) & 95.3 & 70.3 & 30.0 & 71.8 & 2.6 & 69.6 & 10.2 & 33.9 & 48.0 \\
    + Ext. Range & 95.9 & 78.3 & \textbf{63.4} & 71.0 & 2.9 & 71.9 & 13.4 & 39.6 & 54.5 \\
    + BG Removal & 95.1 & 73.5 & 30.3 & 76.2 & 9.0 & 71.4 & 13.1 & 47.3 & 52.0 \\
    + Both & \textbf{96.0} & 81.3 & 61.7 & 76.5 & 8.6 & 72.4 & 16.4 & \textbf{55.4} & \textbf{58.5} \\
    \midrule
    Ours(min) & 86.0 & 80.4 & 61.6 & 62.3 & 12.9 & 66.3 & \textbf{21.9} & 53.0 & 55.6 \\
    Ours(avg) & 89.8 & 81.1 & 58.6 & 69.2 & \textbf{14.0} & 68.2 & 19.8 & 51.0 & 56.5 \\
    \midrule
    Ours(avg) w/ \\
    (2.7, 0.9, 0.3)m & 87.5 & 78.6 & 57.6 & 66.7 & \textbf{14.0} & 66.9 & 20.8 & 49.7 & 55.2\\
    (2.4, 1.2, 0.6, 0.3)m & 89.6 & 81.9 & 59.4 & 67.9 & 13.7 & 69.2 & 21.5 & 52.1 & 56.9\\
    (3.2, 1.6, 0.8, 0.4, 0.2)m & 89.0 & 79.0 & 56.3 & 67.6 & 13.2 & 67.2 & 18.7 & 49.0 & 55.0\\
    % \midrule
    % Hybrid & 92.5 & 82.6 & 58.6 & 73.5 & 14.0 & 70.0 & 19.9 & 51.7 & 57.9 \\
    \bottomrule
  \end{tabular}
  }
\end{table}

{\bf Class-agnostic instance segmentation:} The evaluation protocol we adopt comes from the robotics community~\cite{held2014combining}. It differs from the standard evaluation in computer vision, i.e.\ per-voxel instance segmentation in ScanNet~\cite{dai2017scannet}. One key difference is that 3D instance segmentation does not require the output segmentation to be a valid partition. Instead, it treats the task as retrieval and evaluates the tradeoff between precision and recall. Here we take a similar approach as ScanNet, but modify the evaluation protocol to be class-agnostic and per-point instead of per-voxel.

As we can see in Table~\ref{tab:instance}, the observations are consistent with what we see in Table~\ref{tab:all}: SECOND++(8) with both modifications outperforms our segmentation approach on common classes such as \textit{car}, but falls short on rarer classes (such as person sitting and tram) by a large margin. Overall, the best SECOND approach outperforms the best variant of our approach by 1.6\% in mAP.

\begin{figure}
    \centering
    \includegraphics[width=.8\linewidth]{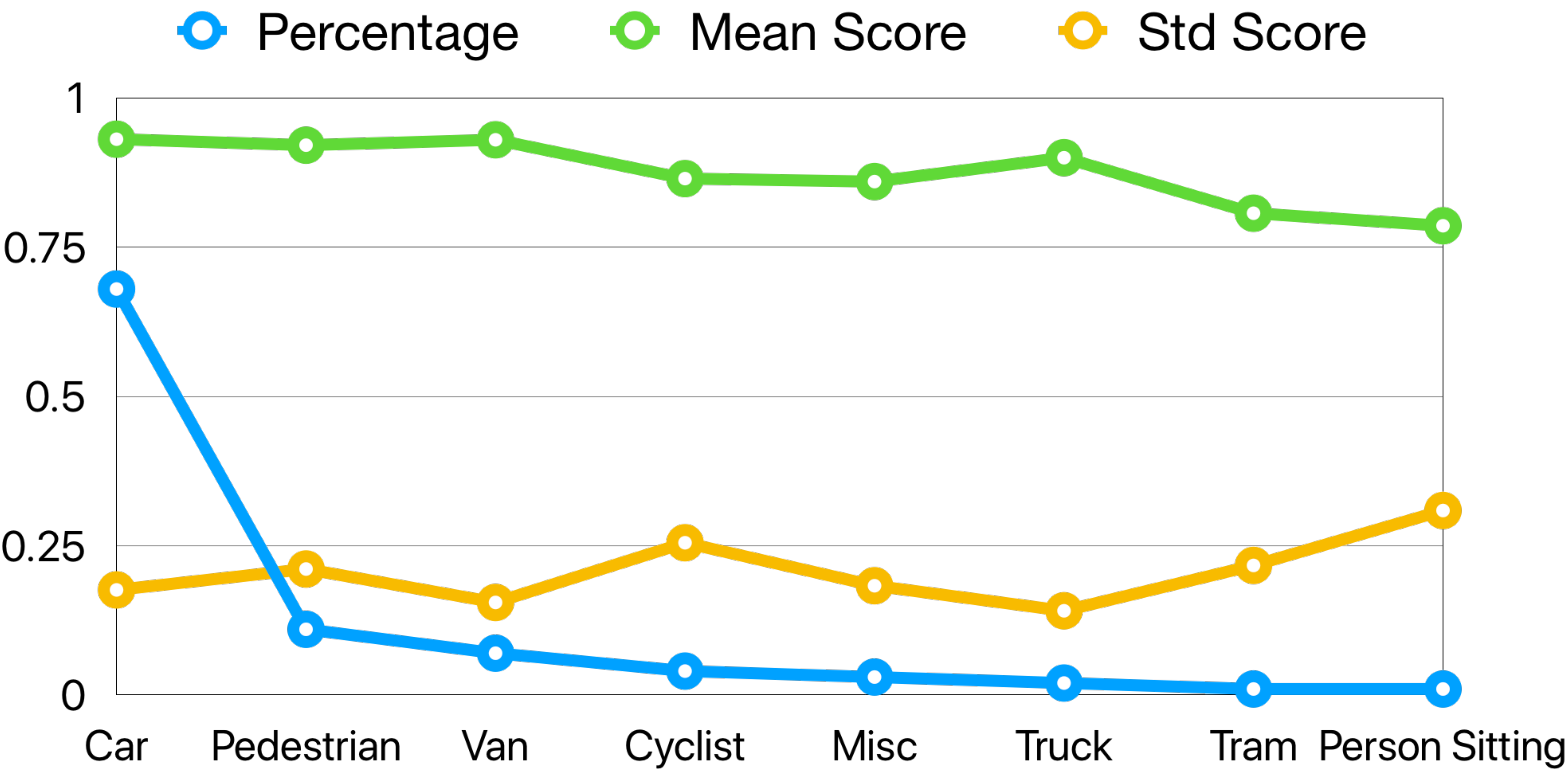}
    \caption{How the learned objectness model generalizes in the tail.}
    \label{fig:objectness}
\end{figure}

{\bf How objectness generalizes} To evaluate how well our learned objectness model generalizes, we apply it onto ground truth segments from the validation set. In Figure~\ref{fig:objectness}, we plot the average objectness score for each class and the standard deviation. We also show the percentage of objects for each class within the training set. As the number of training data decreases dramatically, the average score tends to drops slightly and the variance tends to rise slightly.

{\bf Worst-case evaluation} In Table~\ref{tab:all} and~\ref{tab:instance}, we see Ours(avg) outperforms Ours(min) despite the latter is provably optimal. We have briefly discussed the reason: current protocols do not evaluate worst-case performance. Here, we score the worst IoU between a set of local segments and the ground truths, as Eq.~\eqref{eq:worst-eval} shows, where $\{P_1\dots P_N\}$ and $\{P_1^{gt}\dots P_{N}^{gt}\}$ represents predicted and ground truth segmentation in each of the $N$ frames. We found Ours(min) scores a mean-worst IoU of 72.2, 4.2\% higher than Ours(avg).
\begin{equation}
  \label{eq:worst-eval}
  \sum_{i=1}^N \frac{1}{N} \min_{C \in P_i} \max_{C^{gt} \in P_i^{gt}} \frac{|C \cap C^{gt}|}{|C \cup C^{gt}|}
\end{equation}

\subsection{Additional diagnostics}
\label{sec:diagnostics}
{\bf Sensitivity analysis} Our objectness function is learned on segments from a EC hierarchy generated with 4 distance thresholds \{2m, 1m, 0.5m, 0.25m\}. To analyze how robust our algorithm is to change of hyper-parameters, we test the learned objectness function on different hierarchies. In Table~\ref{tab:all},~\ref{tab:instance}, we find that having a deeper hierarchy significantly reduces segmentation errors. Comparing to hard-thresholded segmentation errors, there are only slight changes in multi-threshold instance segmentation mAP.

{\bf Weighted vs.\ vanilla IoU} Here, we empirically compare weighted IoU and vanilla IoU in terms of defining the training target for our objectness model. As we see in Table~\ref{tab:objectness}, for both worst-case and average-case segmentation, the objectness model trained with weighted IoU perform slightly better than the one trained with vanilla IoU. Note ``Ours(min) - vanilla'' and ``Ours(avg) - vanilla'' share the exact same underlying objectness model.

\begin{table}
  \centering
  \caption{Segmentation errors on KITTI Val.}
  \label{tab:objectness}
  \begin{tabular}{lrrrrrr}
    \toprule
    \multirow{2}{*}{Method} & \multicolumn{2}{c}{Under (\%)} & \multicolumn{2}{c}{Over (\%)} & \multicolumn{2}{c}{Total (\%)}\\
    \cmidrule(lr){2-3}\cmidrule(lr){4-5}\cmidrule(lr){6-7} & \multicolumn{1}{c}{all} & \multicolumn{1}{c}{15m} & \multicolumn{1}{c}{all} & \multicolumn{1}{c}{15m} & \multicolumn{1}{c}{all} & \multicolumn{1}{c}{15m}\\
    \midrule
    Ours(min) - vanilla  & 13.91 & 22.40 & 5.58 & 0.60 & 19.5 & 23.0\\
    Ours(min) - weighted & 13.13 & 21.42 & 5.65 & 0.60 & \textbf{18.8} & \textbf{22.0} \\
    \midrule
    Ours(avg) - vanilla & 10.30 & 15.44 & 7.11 & 3.13 & 17.4 & 18.6\\
    Ours(avg) - weighted & 8.64 & 12.75 & 7.89 & 4.73 & \textbf{16.5} & \textbf{17.5}\\
    \bottomrule
  \end{tabular}
\end{table}

\section*{Conclusion}
We present an approach for class-agnostic point cloud segmentation.The approach efficiently searches over an exponentially large space of candidate segmentations and return one where individual segments score well according to a data-driven point-based model of ``objectness''. We prove that our algorithm is guaranteed to achieve optimality to a specific definition. On KITTI, we demonstrate our approach significantly outperforms past bottom-up approaches and top-down object-based algorithms for segmenting point clouds.

{\bf Acknowledgements:} This work was supported by the CMU Argo AI Center for Autonomous Vehicle Research.

\appendix

\setcounter{section}{0}
\setcounter{figure}{0}
\setcounter{footnote}{0}
\renewcommand{\thesection}{A\arabic{section}}
\renewcommand{\thefigure}{\Alph{figure}}
\def\thesection{\Alph{section}}

{\bf Slides} Please find a slide deck (\href{https://www.cs.cmu.edu/~peiyunh/seg/slides.mp4}{here}) that illustrates the main ideas in this paper.

{\bf Additional visualization} Please find videos (\href{https://www.cs.cmu.edu/~peiyunh/seg/kitti/2011_09_26_drive_0009.mp4}{1}, \href{https://www.cs.cmu.edu/~peiyunh/seg/kitti/2011_09_26_drive_0035.mp4}{2}, \href{https://www.cs.cmu.edu/~peiyunh/seg/kitti/2011_09_28_drive_0021.mp4}{3}) that show advantages and limitations of our approach.

{\bf Additional evaluation} In Table~\ref{tab:all-only-ignore-ovlp-region}, we show segmentation errors under a slightly modified evaluation: instead of skipping overlapping objects entirely, we only ignore the points that fall into the overlapping region.

\begin{table*}[t]
  \captionof{table}{\footnotesize \label{tab:all-only-ignore-ovlp-region} Segmentation errors on KITTI Val. Left shows under-, over-segmentation, and total error. Right shows total error on a per-class basis. Comparing to Table~\ref{tab:all}, instead of skipping overlapping objects entirely, we only ignore the points that fall into the overlapping region. %overlapping ground truth objects are handled in a different way. Instead of ignoring objects with overlap entirely, here we only ignore points that fall into the overlapping regions. We compute segmentation errors based on the remaining points.
  }
  \resizebox{\linewidth}{!}{
    \begin{tabular}{lrrrrrr|rrrrrrrrrrrrrrrrrr}
    \toprule
    \multirow{2}{*}{Method} & \multicolumn{2}{c}{under} & \multicolumn{2}{c}{over} & \multicolumn{2}{c|}{\textbf{total}} & \multicolumn{2}{c}{car} & \multicolumn{2}{c}{van} & \multicolumn{2}{c}{truck} & \multicolumn{2}{c}{pedestrian} & \multicolumn{2}{c}{person sitting} & \multicolumn{2}{c}{cyclist} & \multicolumn{2}{c}{\emph{tram}} & \multicolumn{2}{c}{misc} & \multicolumn{2}{c}{\textbf{mean}} \\
    \cmidrule(lr){2-3}\cmidrule(lr){4-5}\cmidrule(lr){6-7}\cmidrule(lr){8-9}\cmidrule(lr){10-11}\cmidrule(lr){12-13}\cmidrule(lr){14-15}\cmidrule(lr){16-17}\cmidrule(lr){18-19}\cmidrule(lr){20-21}\cmidrule(lr){22-23}\cmidrule(lr){24-25}
    & \multicolumn{1}{c}{all} & \multicolumn{1}{c}{15m} & \multicolumn{1}{c}{all} & \multicolumn{1}{c}{15m} & \multicolumn{1}{c}{all} & \multicolumn{1}{c|}{15m} & \multicolumn{1}{c}{all} & \multicolumn{1}{c}{15m} & \multicolumn{1}{c}{all} & \multicolumn{1}{c}{15m} & \multicolumn{1}{c}{all} & \multicolumn{1}{c}{15m} & \multicolumn{1}{c}{all} & \multicolumn{1}{c}{15m} & \multicolumn{1}{c}{all} & \multicolumn{1}{c}{15m} & \multicolumn{1}{c}{all} & \multicolumn{1}{c}{15m} & \multicolumn{1}{c}{all} & \multicolumn{1}{c}{15m} & \multicolumn{1}{c}{all} & \multicolumn{1}{c}{15m} & \multicolumn{1}{c}{all} & \multicolumn{1}{c}{15m} \\
    \midrule
    EC(2m) & 24.89 & 46.21 & 5.24 & 0.43 & 30.1 & 46.6 & 24.4 & 37.3 & 18.2 & 21.1 & \textbf{29.3} & \textbf{18.5} & 63.1 & 75.2 & 79.2 & 78.5 & 28.7 & 53.0 & 94.9 & 55.0 & 31.0 & 36.2 & 46.1 & 46.9 \\
    EC(1m) & 11.26 & 26.31 & 24.89 & 7.09 & 36.1 & 33.4 & 31.4 & 21.2 & 44.5 & 22.1 & 51.6 & 59.3 & 48.9 & 60.7 & 74.0 & 74.3 & 17.4 & 35.6 & 121.2 & 100.0 & 39.5 & 36.2 & 53.6 & 51.2 \\
    EC(0.5m) & 5.54 & 12.89 & 63.70 & 48.32 & 69.2 & 61.2 & 74.6 & 67.9 & 81.0 & 73.2 & 79.3 & 92.6 & 39.0 & 47.6 & 63.0 & 62.5 & 25.5 & 16.1 & 121.2 & 100.0 & 64.9 & 43.3 & 68.6 & 62.9 \\
    EC(0.25m) & 3.02 & 7.47 & 89.61 & 78.70 & 92.6 & 86.2 & 97.1 & 98.8 & 98.8 & 99.1 & 98.7 & 100.0 & 58.9 & 54.1 & 74.7 & 75.0 & 87.5 & 56.4 & 119.1 & 100.0 & 94.1 & 82.7 & 91.1 & 83.3 \\
    % AVOD++ & 7.22 & 16.22 & 13.63 & 7.68 & 20.8 & 23.9 & 11.3 & 10.2 & 36.2 & 21.1 & 50.5 & 59.3 & 41.5 & 49.9 & 77.3 & 76.4 & 13.9 & 22.8 & 121.2 & 100.0 & 39.0 & 36.2 & 48.9 & 47.0 \\
    % PointPillars (separate) & 2.14 & 3.38 & 29.51 & 28.32 & 31.6 & 31.7 & 21.4 & 19.1 & 52.2 & 57.3 & 65.2 & 96.3 & 42.4 & 43.2 & 79.2 & 81.9 & 40.5 & 31.5 & 121.2 & 100.0 & 64.2 & 70.1 & 60.8 & 62.4 \\
    EC(all)* & \textit{9.72} & \textit{17.16} & \textit{5.24} & \textit{0.43} & \textit{15.0} & \textit{17.6} & \textit{10.9} & \textit{11.6} & \textit{13.6} & \textit{5.6} & \textit{29.3} & \textit{18.5} & \textit{26.9} & \textit{33.6} & \textit{48.7} & \textit{48.6} & \textit{10.4} & \textit{14.1} & \textit{94.5} & \textit{55.0} & \textit{15.8} & \textit{7.1} & \textit{31.3} & \textit{24.3} \\
    \midrule
    % AVOD (LiDAR) & 25.87 & 17.01 & 56.98 & 67.97 & 82.9 & 85.0 & 81.8 & 85.5 & 86.4 & 64.8 & 78.8 & 29.6 & 85.4 & 92.0 & 96.1 & 95.8 & 88.3 & 87.9 & 58.9 & 20.0 & 87.8 & 67.7 & 82.9 & 67.9 \\
    AVOD & - & - & - & - & - & - & 81.8 & 85.5 & - & - & - & - & 85.4 & 92.0 & - & - & 88.3 & 87.9 & - & - & - & - & - & - \\
    % AVOD++ (LiDAR) & 5.95 & 12.89 & 15.37 & 10.60 & 21.3 & 23.5 & 12.5 & 10.7 & 38.7 & 27.2 & 51.6 & 59.3 & 36.6 & 46.4 & 72.7 & 72.9 & 13.1 & 18.8 & 121.2 & 100.0 & 39.4 & 36.2 & 48.2 & 46.4 \\
    AVOD++ & - & - & - & - & - & - & 12.5 & 10.7 & - & - & - & - & 36.6 & 46.4 & - & - & 13.1 & 18.8 & - & - & - & - & - & -\\
    % PointPillars (joint) & 3.14 & 5.59 & 31.78 & 34.32 & 34.9 & 39.9 & 22.4 & 22.7 & 56.9 & 71.8 & 62.8 & 96.3 & 58.6 & 63.6 & 89.6 & 92.4 & 44.8 & 36.2 & 120.3 & 100.0 & 68.8 & 77.2 & 65.5 & 70.0 \\
    PointPillars++ & - & - & - & - & - & - & 22.4 & 22.7 & - & - & - & - & 58.6 & 63.6 & - & - & 44.8 & 36.2 & - & - & - & - & - & - \\
    % PointRCNN++ & 8.24 & 18.81 & 9.79 & 3.79 & 18.0 & 22.6 & 7.6 & 5.2 & 22.6 & 6.1 & 49.5 & 51.9 & 48.9 & 60.7 & 76.6 & 75.7 & 17.4 & 35.6 & 120.8 & 100.0 & 34.8 & 29.9 & 47.3 & 45.6 \\
    PointRCNN++ & - & - & - & - & - & - & 7.6 & 5.2 & - & - & - & - & - & - & - & - & - & - & - & - & - & - & - & - \\
    \midrule
    % SECOND++(4) & 4.60 & 8.97 & 11.79 & 8.06 & 16.4 & 17.0 & 8.1 & 4.2 & 22.5 & 7.5 & 48.4 & 66.7 & 34.4 & 41.6 & 73.4 & 73.6 & 10.6 & 14.1 & 121.2 & 100.0 & 35.5 & 26.8 & 44.3 & 41.8 \\
    SECOND++(4) & - & - & - & - & - & - & 8.1 & 4.2 & 22.5 & 7.5 & - & - & 34.4 & 41.6 & - & - & 10.6 & 14.1 & - & - & - & - & - & - \\
    % + Ext. Range & 4.32 & 8.34 & 10.85 & 8.44 & 15.2 & 16.8 & 7.2 & 4.5 & 18.4 & 7.0 & 44.2 & 55.6 & 34.2 & 41.8 & 71.4 & 71.5 & 10.0 & 10.1 & 119.1 & 100.0 & 33.2 & 22.8 & 42.2 & 39.2 \\
    + Ext. Range & - & - & - & - & - & - & \textbf{7.2} & 4.5 & 18.4 & 7.0 & - & - & 34.2 & 41.8 & - & - & 10.0 & \textbf{10.1} & - & - & - & - & - & - \\
    % + BG Removal & 3.57 & 6.02 & 13.88 & 9.61 & 17.5 & 15.6 & 9.9 & 4.0 & 25.9 & 8.9 & 46.1 & 55.6 & 33.2 & 39.1 & 58.4 & 58.3 & 11.9 & 15.4 & 120.8 & 100.0 & 29.6 & 20.5 & 42.0 & 37.7 \\
    + BG Removal & - & - & - & - & - & - & 9.9 & \textbf{4.0} & 25.9 & 8.9 & - & - & 33.2 & 39.1 & - & - & 11.9 & 15.4 & - & - & - & - & - & - \\
    % + Both & 3.67 & 6.08 & 12.67 & 9.41 & 16.3 & 15.5 & 9.3 & 4.0 & 20.6 & 8.0 & 41.9 & 55.6 & 32.9 & 38.7 & 58.4 & 58.3 & 12.0 & 14.1 & 118.2 & 100.0 & 27.3 & 21.3 & 40.1 & 37.5 \\
    + Both & - & - & - & - & - & - & 9.3 & \textbf{4.0} & 20.6 & 8.0 & - & - & 32.9 & \textbf{38.7} & - & - & 12.0 & 14.1 & - & - & - & - & - & - \\
    \midrule
    SECOND++(8) & 4.07 & 7.42 & 11.79 & 9.25 & 15.9 & 16.7 & 8.1 & 5.1 & 23.2 & 8.0 & 43.2 & 51.9 & 33.2 & 39.5 & 70.8 & 70.8 & 9.6 & 13.4 & 119.5 & 100.0 & 28.7 & 18.9 & 42.0 & 38.5 \\
    + Ext. Range & 4.28 & 7.85 & 10.48 & 9.10 & \textbf{14.8} & 17.0 & 7.4 & 4.9 & 17.6 & 8.5 & 34.0 & 48.1 & 34.3 & 40.8 & 72.7 & 72.9 & \textbf{9.5} & 12.1 & 115.3 & 100.0 & 27.8 & 22.8 & 39.8 & 38.8 \\
    + BG Removal & 3.89 & 6.74 & 13.34 & 9.74 & 17.2 & 16.5 & 10.0 & 5.0 & 24.7 & 5.6 & 45.8 & 63.0 & \textbf{32.8} & 40.0 & 63.0 & 62.5 & 11.7 & 14.8 & 116.1 & 95.0 & 26.6 & 21.3 & 41.3 & 38.4 \\
    + Both & 3.84 & 6.66 & 12.28 & 9.66 & 16.1 & \textbf{16.3} & 9.4 & 4.8 & 20.1 & 6.1 & 38.2 & 63.0 & 33.2 & 40.1 & 63.0 & 62.5 & 11.5 & 13.4 & 112.7 & 95.0 & 23.4 & 21.3 & 38.9 & 38.3 \\
    \midrule
    Ours(min) & 15.09 & 25.70 & 5.57 & 0.58 & 20.7 & 26.3 & 15.9 & 17.9 & 15.4 & 11.3 & \textbf{29.3} & \textbf{18.5} & 39.2 & 48.0 & 68.8 & 67.4 & 17.9 & 26.2 & 94.9 & 55.0 & 23.2 & \textbf{15.0} & 38.1 & 32.4 \\
    Ours(avg) & 10.54 & 17.41 & 7.87 & 4.60 & 18.4 & 22.0 & 13.8 & 14.6 & 14.8 & 7.0 & 30.1 & 29.6 & 33.4 & 40.6 & 61.7 & 61.1 & 16.4 & 19.5 & 94.9 & 55.0 & 22.3 & \textbf{15.0} & 35.9 & 30.3 \\
    \midrule
    Ours(avg) w/ \\
    (2.7, 0.9, 0.3)m & 13.41 & 19.65 & 6.16 & 5.13 & 19.6 & 24.8 & 15.8 & 17.1 & 13.9 & 13.6 & 23.6 & 25.9 & 36.2 & 43.3 & 63.6 & 63.2 & 18.3 & 26.2 & 72.0 & 20.0 & 20.2 & 19.7 & 33.0 & 28.6\\
    (2.4, 1.2, 0.6, 0.3)m & 11.26 & 16.65 & 5.94 & 5.69 & 17.2 & 22.3 & 12.9 & 15.0 & 11.5 & 7.5 & 27.2 & 29.6 & 34.7 & 41.2 & \textbf{59.7} & \textbf{59.0} & 15.7 & 19.5 & 80.9 & 25.0 & \textbf{18.4} & 19.7 & 32.6 & 27.1 \\
    (3.2, 1.6, 0.8, 0.4, 0.2)m & 12.36 & 15.10 & 4.67 & 5.36 & 17.0 & 20.5 & 12.8 & 12.7 & \textbf{10.7} & \textbf{4.2} & \textbf{21.5} & 22.2 & 35.8 & 41.2 & \textbf{59.7} & \textbf{59.0} & 17.4 & 16.8 & \textbf{67.8} & \textbf{10.0} & 19.0 & 17.3 & \textbf{30.6} & \textbf{22.9}\\
    \bottomrule
    \end{tabular}
  }
\end{table*}

{
    \bibliographystyle{IEEEtran}
    \bibliography{ref}
}

\end{document}